\documentclass[aos]{imsart}

\RequirePackage{amsthm,amsmath,amsfonts,amssymb}
\RequirePackage[authoryear]{natbib}
\RequirePackage[colorlinks,citecolor=blue,urlcolor=blue]{hyperref}
\RequirePackage{graphicx}
\RequirePackage{booktabs}
\RequirePackage[shortlabels]{enumitem}
\RequirePackage{bbm}
\RequirePackage{xcolor}

\startlocaldefs
\numberwithin{equation}{section}

\theoremstyle{plain}
\newtheorem{theorem}{Theorem}[section]
\newtheorem{lemma}[theorem]{Lemma}
\newtheorem{proposition}[theorem]{Proposition}
\newtheorem{corollary}[theorem]{Corollary}

\theoremstyle{definition}

\newtheorem{remark}[theorem]{Remark}

\newcommand{\E}{\mathbb{E}}

\newcommand{\R}{\mathbb{R}}
\newcommand{\norm}[1]{\left\|#1\right\|}
\newcommand{\opnorm}[1]{\left\|#1\right\|_{\mathrm{op}}}
\newcommand{\xbar}{\bar{x}_n}
\newcommand{\Sighat}{\widehat{\Sigma}_n}
\DeclareMathOperator{\tr}{tr}

\endlocaldefs

\begin{document}

\begin{frontmatter}

\title{Online Covariance Estimation in Averaged SGD:
Improved Batch-Means Rates and Minimax Optimality via
Trajectory Regression}
\runtitle{Online Covariance Estimation in Averaged SGD}

\begin{aug}
\author[A]{\fnms{Yijin}~\snm{Ni}\ead[label=e1]{yni64@gatech.edu}}
\author[A]{\fnms{Xiaoming}~\snm{Huo}\ead[label=e2]{xiaoming@isye.gatech.edu}}

\address[A]{H.~Milton Stewart School of Industrial and Systems Engineering,
Georgia Institute of Technology\printead[presep={,\ }]{e1,e2}}
\end{aug}

\begin{abstract}
We study online covariance matrix estimation for Polyak--Ruppert averaged
stochastic gradient descent (SGD).  The online batch-means estimator of
\citet{zhu2023online} achieves an operator-norm convergence rate of
$O\!\left(n^{-(1-\alpha)/4}\right)$, which yields $O(n^{-1/8})$ at the
optimal learning-rate exponent $\alpha\to 1/2^{+}$.  A rigorous
per-block bias analysis reveals that re-tuning the block-growth parameter
improves the batch-means rate to
$O\!\left(n^{-(1-\alpha)/3}\right)$, achieving $O(n^{-1/6})$.
The modified estimator requires no Hessian access and preserves $O(d^{2})$
memory.  We provide a complete error decomposition into variance,
stationarity bias, and nonlinearity bias components.
A weighted-averaging variant that avoids hard truncation is also discussed.
We establish the minimax rate $\Theta(n^{-(1-\alpha)/2})$ for Hessian-free
covariance estimation from the SGD trajectory: a Le~Cam lower bound gives
$\Omega(n^{-(1-\alpha)/2})$, and a trajectory-regression estimator---which
estimates the Hessian by regressing SGD increments on iterates---achieves
$O(n^{-(1-\alpha)/2})$, matching the lower bound.
The construction reveals that the bottleneck is the sublinear
accumulation of information about the Hessian from the SGD drift.
\end{abstract}

\begin{keyword}[class=MSC]
\kwd[Primary ]{62L20}
\kwd{62F12}
\kwd[; secondary ]{62C20}
\end{keyword}

\begin{keyword}
\kwd{stochastic gradient descent}
\kwd{Polyak--Ruppert averaging}
\kwd{covariance estimation}
\kwd{batch means}
\kwd{online inference}
\kwd{minimax rate}
\end{keyword}

\end{frontmatter}

\section{Introduction}\label{sec:intro}

Stochastic gradient descent (SGD) and its variants are the workhorses of
modern machine learning and large-scale optimization.  Given a population
risk $F(x)=\E[f(x,\zeta)]$ with minimizer $x^{*}$, the Polyak--Ruppert
averaged iterate $\xbar=(1/n)\sum_{t=1}^{n}x_t$ achieves the optimal
$O(1/n)$ rate for the mean-squared error under standard regularity
conditions \citep{polyak1992,ruppert1988}.
Asymptotically,
\begin{equation}\label{eq:clt}
\sqrt{n}\,(\xbar - x^{*})\;\xrightarrow{d}\;
\mathcal{N}\!\left(0,\,V\right),
\qquad
V \;=\; H^{-1}S\,H^{-1},
\end{equation}
where $H=\nabla^{2}F(x^{*})$ is the Hessian at the optimum and
$S=\E[\nabla f(x^{*},\zeta)\,\nabla f(x^{*},\zeta)^{\top}]$ is the noise
covariance.  The matrix $V$ is the \emph{asymptotic covariance} and is
the key quantity needed to construct confidence intervals, hypothesis tests,
and other inferential statements about $x^{*}$.

A fundamental practical difficulty is that $V$ involves the Hessian $H$,
which is often expensive or impossible to compute.  In many modern
applications---deep learning, reinforcement learning, large-scale
regression---only first-order gradient information is available.  This has
motivated a line of research on \emph{Hessian-free} covariance estimation,
where one estimates $V$ directly from the SGD iterates $(x_1,\ldots,x_n)$
without ever forming or inverting $H$.

The \emph{online batch-means} estimator of \citet{zhu2023online} is an
elegant approach in this vein.  The iterate sequence is partitioned into
blocks of growing sizes; within each block, a centered sample covariance is
computed; the overall estimator is the average of these block-level
covariance matrices.  Because the blocks grow, the estimator can be updated
online as new iterates arrive, using only $O(d^{2})$ memory.
\citet{zhu2023online} showed that their estimator achieves operator-norm
convergence
\begin{equation}\label{eq:zhu-rate}
\E\!\left[\opnorm{\Sighat(1) - V}\right]
\;=\; O\!\left(n^{-(1-\alpha)/4}\right),
\end{equation}
where $\alpha\in(1/2,1)$ is the learning-rate exponent.  The best rate is
$O(n^{-1/8})$, obtained as $\alpha\to 1/2^{+}$.

\paragraph{Contributions.}
We introduce a \emph{burn-in parameter} $\rho\in(0,1]$ and a rigorous
per-block bias analysis into the batch-means framework.  The analysis
shows that the per-block stationarity bias is $O(\tau_m/a_m)$, where
$\tau_m/a_m$ is the ratio of mixing time to block size.  This leads to an improved batch-means
rate via optimal block-growth tuning:
\begin{equation}\label{eq:our-rate}
\E\!\left[\opnorm{\Sighat(\rho) - V}\right]
\;=\; O\!\left(n^{-(1-\alpha)/3}\right),
\end{equation}
yielding $O(n^{-1/6})$ as $\alpha\to 1/2^{+}$, an improvement over
the $O(n^{-1/8})$ rate of \citet{zhu2023online}.
We further establish the minimax rate $\Theta(n^{-(1-\alpha)/2})$
by constructing a trajectory-regression estimator that achieves
$O(n^{-(1-\alpha)/2})$, matching a Le~Cam lower bound.
The specific contributions are as follows.
\begin{enumerate}[(1)]
\item \textbf{Unified framework.}  We introduce a burn-in parameter
  $\rho\in(0,1]$ into the batch-means estimator that controls the
  bias--variance tradeoff (Section~\ref{sec:bm}).  The original estimator
  of \citet{zhu2023online} corresponds to $\rho=1$; the burn-in
  modification corresponds to $\rho<1$.
\item \textbf{Complete error analysis.}  We provide a three-way error
  decomposition---variance, stationarity bias, nonlinearity bias---and
  derive the batch-means rate $O(n^{-(1-\alpha)/3})$ via optimal
  block-growth tuning (Corollary~\ref{cor:main}).
\item \textbf{Minimax rate.}  We prove a Le~Cam lower bound
  $\Omega(n^{-(1-\alpha)/2})$ (Theorem~\ref{thm:lower}) and a matching
  upper bound via a trajectory-regression estimator
  (Theorem~\ref{thm:regression}), establishing the minimax rate
  $\Theta(n^{-(1-\alpha)/2})$ for Hessian-free covariance estimation.
\item \textbf{Weighted variant.}  We show that a soft weighting scheme
  $w_m\propto m^{p}$ achieves the same batch-means rate without hard
  truncation (Proposition~\ref{prop:weighted}).
\end{enumerate}

\paragraph{Rate comparison.}
Table~\ref{tab:rates} summarizes the landscape of convergence rates for
covariance estimation in the SGD setting.

\begin{table}[tbp]
\centering
\caption{Convergence rates for estimating $V$ in operator norm.
Here $\alpha\in(1/2,1)$ is the learning-rate exponent.  ``Hessian?''
indicates whether the method requires access to $H$ or $H^{-1}$.}
\label{tab:rates}
\smallskip
\footnotesize
\setlength{\tabcolsep}{4pt}
\begin{tabular}{@{}lcccc@{}}
\toprule
\textbf{Method} & \textbf{Rate} & \textbf{Hessian?} & \textbf{Online?}
& \textbf{Reference} \\
\midrule
Online BM &
$n^{-(1-\alpha)/4}$ ($n^{-1/8}$) & No & Yes &
\citet{zhu2023online} \\
Online BM (opt.\ $\beta$) &
$n^{-(1-\alpha)/3}$ ($n^{-1/6}$) & No &
Yes & This paper \\
\textbf{Traj.\ regression} &
$\mathbf{n^{-(1-\alpha)/2}}$ ($\mathbf{n^{-1/4}}$) & \textbf{No} &
\textbf{Yes} & \textbf{This paper} \\
Online BM, Markovian &
$n^{-(1-\alpha)/4}$ ($n^{-1/8}$) & No & Yes &
\citet{roy2023online} \\
Online BM, nonsmooth &
$n^{-(1-\alpha)/4}$ ($n^{-1/8}$) & No & Yes &
\citet{jiang2025nonsmooth} \\
Equal BM &
$n^{-(1-\alpha)/4}$ ($n^{-1/8}$) & No & No &
\citet{singh2025equal} \\
Plug-in &
$n^{-\alpha/2}$ ($n^{-1/2}$) & Yes & Yes &
\citet{chen2020statistical} \\
ROOT-SGD &
$t^{-1/2}$ & Yes & Yes &
\citet{luo2022rootsgd} \\
Minimax LB (no Hessian) &
$\Omega(n^{-(1-\alpha)/2})$ &
--- & --- &
Thm.~\ref{thm:lower} \\
i.i.d.\ minimax &
$\sqrt{d/n}$ & --- & --- &
Classical \\
\bottomrule
\end{tabular}
\end{table}

\paragraph{Paper outline.}
The remainder of this paper is organized as follows.
Section~\ref{sec:setting} introduces the Polyak--Ruppert averaged SGD
framework and the standing assumptions.
Section~\ref{sec:bm} defines the online batch-means estimator with the
burn-in parameter $\rho$.
Section~\ref{sec:error} develops the error decomposition and derives
the batch-means convergence rate: the original
$n^{-(1-\alpha)/4}$ rate of \citet{zhu2023online} and the
improved $n^{-(1-\alpha)/3}$ rate via optimal block-growth tuning.
Section~\ref{sec:lower} establishes the minimax rate
$\Theta(n^{-(1-\alpha)/2})$ via a lower bound and a matching
trajectory-regression upper bound.
Section~\ref{sec:related} surveys related work.
Section~\ref{sec:discussion} discusses fundamental barriers.
Section~\ref{sec:experiments} presents numerical experiments.
Section~\ref{sec:conclusion} concludes.
The Supplement reviews the i.i.d.\ baseline rate for context.

\section{Setting and Assumptions}\label{sec:setting}

\subsection{Polyak--Ruppert Averaged SGD}\label{sec:pr-sgd}

We consider the stochastic optimization problem
\begin{equation}\label{eq:pop-risk}
\min_{x\in\R^d}\; F(x) \;=\; \E_{\zeta}\!\left[f(x,\zeta)\right],
\end{equation}
where $x\in\R^{d}$ denotes the parameter vector to be optimized
(e.g., the weights of a model) and $\zeta$ denotes a random data
sample drawn from the data distribution $\mathcal{D}$; $f(x,\zeta)$
is the loss incurred by parameter $x$ on sample $\zeta$, and $F(x)$
is the corresponding population (expected) risk.
We assume $F$ is strongly convex so that
\eqref{eq:pop-risk} has a unique minimizer $x^{*}$.  The SGD iterates are
\begin{equation}\label{eq:sgd}
x_{t+1} \;=\; x_t - \eta_t\,\nabla f(x_t,\zeta_t),
\qquad t=0,1,2,\ldots,
\end{equation}
where $\zeta_0,\zeta_1,\ldots$ are independent draws from $\mathcal{D}$
and the step sizes follow the polynomial schedule
\begin{equation}\label{eq:stepsize}
\eta_t \;=\; \eta_0\,t^{-\alpha}, \qquad t\ge 1,
\qquad \alpha\in(1/2,\,1),
\end{equation}
with $\eta_0$ used for $t=0$.
The Polyak--Ruppert average is $\xbar = n^{-1}\sum_{t=1}^{n}x_t$.
Under suitable regularity, the central limit theorem \eqref{eq:clt} holds
with
\[
H = \nabla^{2}F(x^{*}),
\qquad
S = \E\!\left[\nabla f(x^{*},\zeta)\,
\nabla f(x^{*},\zeta)^{\top}\right],
\qquad
V = H^{-1}S\,H^{-1}.
\]

\subsection{Assumptions}\label{sec:assumptions}

We impose the following standard conditions.

\begin{enumerate}[({A}1)]
\item \label{ass:sc}
\textbf{Strong convexity.}
$F$ is twice continuously differentiable and $\mu$-strongly convex:
$\nabla^{2}F(x)\succeq \mu I$ for all $x\in\R^{d}$, with $\mu>0$.

\item \label{ass:lip}
\textbf{Lipschitz Hessian.}
The Hessian is $L$-Lipschitz continuous:
$\opnorm{\nabla^{2}F(x)-\nabla^{2}F(y)}\le L\norm{x-y}$ for all
$x,y\in\R^{d}$.

\item \label{ass:subg}
\textbf{Sub-Gaussian gradient noise.}
The noise $\xi_t = \nabla f(x^{*},\zeta_t) - \nabla F(x^{*})$
satisfies $\E[\xi_t]=0$ and, for some $\sigma>0$,
\[
\E\!\left[\exp\!\left(\lambda\,\xi_t^{\top}u\right)\right]
\;\le\; \exp\!\left(\tfrac{\lambda^{2}\sigma^{2}}{2}\right)
\]
for all $\lambda\in\R$, $\norm{u}=1$, and all $t$.

\item \label{ass:fourth}
\textbf{Bounded fourth moments.}
There exists $M_4>0$ such that
$\E\!\left[\norm{\xi_t}^{4}\right]\le M_4$ for all $t$.

\item \label{ass:lr}
\textbf{Step-size regime.}
The exponent satisfies $\alpha\in(1/2,1)$ and the initial step size
satisfies $\eta_0 > 1/(2\mu)$.
\end{enumerate}

Under \ref{ass:sc}--\ref{ass:lr}, the iterates satisfy the
well-known rate
\begin{equation}\label{eq:iterate-rate}
\E\!\left[\norm{x_t - x^{*}}^{2}\right] \;=\; O(t^{-\alpha}).
\end{equation}
This bound is established in \citet[Theorem~1]{moulines2011}, whose result---stated in their notation with iterate~$\theta_n$, target~$\theta^{*}$, and step size~$\gamma_n=C n^{-\alpha}$---corresponds in our notation to the bound on $\E[\norm{x_t-x^{*}}^{2}]$ with $\gamma_n\!\leftrightarrow\!\eta_t$ and $C\!\leftrightarrow\!\eta_0$; the surrounding discussion in the same reference identifies $4\eta_0\sigma^{2}/(\mu t^{\alpha})$ as the asymptotic term, giving precisely~\eqref{eq:iterate-rate}.
The condition $\eta_0>1/(2\mu)$ ensures that the averaged iterates
achieve the optimal $O(1/n)$ rate; see \citet{bach2024learning} for a
textbook treatment.  In practice, $\mu$ is rarely known exactly, but
the condition is mild: any step size that is ``not too conservative''
suffices, and the results degrade gracefully when $\eta_0$ is close to
the threshold $1/(2\mu)$ (the constants grow but the rates are
unaffected).

\subsection{Goal}\label{sec:goal}

Our goal is to construct an estimator $\Sighat(\rho)$ of $V$ from the SGD
trajectory $(x_1,\ldots,x_n)$ such that:
\begin{enumerate}[(i)]
\item $\Sighat(\rho)$ can be computed \emph{online} (in a single pass, with
  $O(d^{2})$ memory);
\item $\Sighat(\rho)$ does not require access to the Hessian $H$ or its inverse;
\item the operator-norm error
  $\E\!\left[\opnorm{\Sighat(\rho) - V}\right]$
  converges to zero as fast as possible.
\end{enumerate}

\section{The Online Batch-Means Estimator}\label{sec:bm}

We review the online batch-means estimator introduced by
\citet{zhu2023online}.

\paragraph{Block construction.}
Fix a growth exponent $\beta>0$ and a constant $C>0$.  Define block sizes
\begin{equation}\label{eq:block-sizes}
a_m \;=\; \lfloor C\,m^{\beta}\rfloor, \qquad m=1,2,3,\ldots
\end{equation}
and cumulative endpoints $t_0=0$,
\begin{equation}\label{eq:endpoints}
t_m \;=\; \sum_{j=1}^{m}a_j, \qquad m=1,2,\ldots
\end{equation}
Given $n$ iterates, the number of complete blocks is
$b_n = \max\{m: t_m\le n\}$.  Since $t_m = \Theta(m^{\beta+1})$, we have
$b_n = \Theta(n^{1/(\beta+1)})$.

\paragraph{Centered block sums.}
For each block $m=1,\ldots,b_n$, define the centered and normalized block sum
\begin{equation}\label{eq:block-sum}
Y_m \;=\; \frac{1}{\sqrt{a_m}}
\sum_{k=t_{m-1}+1}^{t_m}(x_k - \xbar).
\end{equation}
Note that the centering uses the global average $\xbar$ rather than a
per-block average.  This is important for the bias analysis.  In the
online implementation (Remark~\ref{rem:online} below), $\xbar$ is
replaced by the running average $\bar{x}_t$ at the current time $t$;
the resulting error is absorbed into the centering correction analyzed
in equation~\eqref{eq:centering-bias}.

\paragraph{Estimator.}
Fix a \emph{burn-in fraction} $\rho\in(0,1]$ and let
$K_n = \lfloor\rho\,b_n\rfloor$.  The \emph{online batch-means
estimator with burn-in} is
\begin{equation}\label{eq:bm-est}
\Sighat(\rho)
\;=\; \frac{1}{K_n}\sum_{m=b_n - K_n + 1}^{b_n}Y_m\,Y_m^{\top}.
\end{equation}
Setting $\rho=1$ recovers the original estimator of
\citet{zhu2023online}, which averages over all $b_n$ blocks.
Setting $\rho<1$ discards the earliest $(1-\rho)\,b_n$ blocks and
averages only the last $K_n$---the \emph{burn-in} modification.

\begin{remark}[Online updatability]\label{rem:online}
The estimator \eqref{eq:bm-est} can be computed in a single pass over
$(x_1,x_2,\ldots)$ with $O(d^{2})$ memory, for any fixed $\rho$.
At time $t$, one maintains:
(i)~the running average $\bar{x}_t$,
(ii)~the current partial block sum,
(iii)~a running matrix sum of the retained block outer products.
For $\rho=1$, this is the cumulative sum $\sum_{m}Y_m Y_m^{\top}$.
For $\rho<1$, one additionally maintains the sum of the discarded
blocks' contributions and subtracts it, or simply restarts the
accumulation at block $b_n - K_n + 1$.  In either case, the total
sample size $n$ need not be known in advance.
\end{remark}

\begin{remark}[Why growing blocks]\label{rem:growing}
If equal-sized blocks were used, the block size would need to be chosen as
a function of $n$, requiring advance knowledge of the sample size.
Growing blocks $a_m=\lfloor C m^{\beta}\rfloor$ automatically adapt: the
$m$-th block is larger than the $(m-1)$-th, so later blocks capture more
of the stationary behavior while earlier blocks are short and
inexpensive.  This design is essential for the online property.
\end{remark}

\section{Error Analysis}\label{sec:error}

In this section we develop the three-way error decomposition for the
online batch-means estimator $\Sighat(\rho)$ and derive its convergence
rate as a function of the burn-in fraction $\rho$, the block-growth
exponent $\beta$, and the learning-rate exponent $\alpha$.  The
decomposition reveals the structure of the batch-means error:
at the block-growth exponent $\beta^*$ of \citet{zhu2023online},
the rate is $O(n^{-(1-\alpha)/4})$; at the smaller optimal
$\beta^\dagger$, the rate improves to $O(n^{-(1-\alpha)/3})$.

\subsection{Linearization}\label{sec:linearize}

Define the deviation $\delta_t = x_t - x^{*}$.  We decompose the
stochastic gradient $\nabla f(x_t,\zeta_t)$ into a linear term, a
noise term, and a remainder.

\emph{Step~1: Decompose the population gradient.}
A Taylor expansion of $\nabla F(x_t)$ around $x^{*}$ (using
$\nabla F(x^{*})=0$) gives
\begin{equation}\label{eq:taylor}
\nabla F(x_t) \;=\; H\,\delta_t + R(\delta_t),
\end{equation}
where the remainder satisfies
$\norm{R(\delta_t)}\le (L/2)\norm{\delta_t}^{2}$ by Assumption~\ref{ass:lip}.

\emph{Step~2: Decompose the stochastic gradient.}
Recall the noise $\xi_t = \nabla f(x^{*},\zeta_t) - \nabla F(x^{*})
= \nabla f(x^{*},\zeta_t)$
from Assumption~\ref{ass:subg} (using $\nabla F(x^{*})=0$).
Write the stochastic gradient as
\begin{align}
\nabla f(x_t,\zeta_t)
&= \nabla F(x_t)
   + \bigl[\nabla f(x_t,\zeta_t) - \nabla F(x_t)\bigr]
   \notag\\
&= H\,\delta_t + R(\delta_t)
   + \xi_t
   + \underbrace{\bigl[\nabla f(x_t,\zeta_t)
     - \nabla F(x_t)\bigr] - \xi_t}_{=:\,r_t - R(\delta_t)},
   \label{eq:grad-decomp}
\end{align}
where we define the effective remainder
\[
r_t \;:=\; R(\delta_t)
\;+\; \bigl[\nabla f(x_t,\zeta_t)-\nabla F(x_t)\bigr] - \xi_t.
\]
The term $\nabla f(x_t,\zeta_t)-\nabla F(x_t)$ is the stochastic
noise at~$x_t$, and $\xi_t$ is the stochastic noise at~$x^{*}$.
Their difference captures the variation of the noise field between
$x_t$ and~$x^{*}$, and together with the Taylor remainder
$R(\delta_t)$ it collects all higher-order and cross terms
into~$r_t$.  Equation~\eqref{eq:grad-decomp} therefore simplifies to
$\nabla f(x_t,\zeta_t) = H\,\delta_t + \xi_t + r_t$.

\emph{Step~3: Derive the linearized recursion.}
Substituting into the SGD update \eqref{eq:sgd}:
\begin{align*}
x_{t+1}
&= x_t - \eta_t\,\nabla f(x_t,\zeta_t) \\
&= x_t - \eta_t\bigl(H\,\delta_t + \xi_t + r_t\bigr).
\end{align*}
Subtracting $x^{*}$ from both sides and using $\delta_t = x_t - x^{*}$:
\begin{equation}\label{eq:linear-recursion}
\delta_{t+1}
\;=\; \delta_t - \eta_t\,H\,\delta_t - \eta_t\,\xi_t - \eta_t\,r_t
\;=\; (I - \eta_t H)\,\delta_t - \eta_t\,\xi_t - \eta_t\,r_t.
\end{equation}
The effective remainder $r_t$ satisfies
\begin{equation}\label{eq:remainder-bound}
\E\!\left[\norm{r_t}^{2}\right]
\;\le\; C_r\,\E\!\left[\norm{\delta_t}^{4}\right]
\;=\; O(t^{-2\alpha})
\end{equation}
by the iterate rate \eqref{eq:iterate-rate} and the bounded fourth-moment
assumption \ref{ass:fourth}.

\subsection{Three-Way Decomposition}\label{sec:decomposition}

We decompose the error $\Sighat(\rho) - V$ into three components.
The decomposition holds for any $\rho\in(0,1]$; the role of $\rho$
enters through the range of the summation index.
Define the \emph{stationary linearized iterate} $x_k^{\mathrm{lin}}$ as
the solution of the linearized recursion~\eqref{eq:linear-recursion}
with the nonlinear remainder $r_t$ dropped:
\begin{equation}\label{eq:lin-iterate}
x_{k+1}^{\mathrm{lin}} - x^{*}
\;=\; (I - \eta_k H)\bigl(x_k^{\mathrm{lin}} - x^{*}\bigr)
- \eta_k\,\xi_k,
\end{equation}
initialized from the stationary distribution of
\eqref{eq:lin-iterate} (i.e., with the transient effects removed).

For each block $m$, define the \emph{ideal} block sum that would arise
from this stationary linear process:
\[
Y_m^{*} \;=\; \frac{1}{\sqrt{a_m}}
\sum_{k=t_{m-1}+1}^{t_m}\left(x_k^{\mathrm{lin}} - x^{*}\right).
\]
Correspondingly, let $V^{(m)}=\E[Y_m^{*}(Y_m^{*})^{\top}]$ denote the
per-block population covariance.  Then we can write
\begin{equation}\label{eq:three-way}
\begin{aligned}
\Sighat(\rho) - V
\;=\;&
\underbrace{\frac{1}{K_n}\sum_{m=m_0}^{b_n}
\left(Y_m^{*} (Y_m^{*})^{\top} - V^{(m)}\right)}_{T_1\;\text{(variance)}}
\;+\;
\underbrace{\frac{1}{K_n}\sum_{m=m_0}^{b_n}
\left(V^{(m)} - V\right)}_{T_2\;\text{(stationarity bias)}}\\
&\;+\;
\underbrace{\text{nonlinearity correction}}_{T_3\;\text{(nonlinearity bias)}},
\end{aligned}
\end{equation}
where $m_0 = b_n - K_n + 1$ is the index of the first retained block.
When $\rho=1$, $m_0=1$ and the sums range over all blocks; when
$\rho<1$, $m_0=\Theta(b_n)$ and only the last $K_n$ blocks contribute.

The term $T_1$ is a zero-mean fluctuation, $T_2$ captures the bias from
non-stationarity of the SGD Markov chain, and $T_3$ collects terms arising
from the nonlinear remainder $r_t$.

\subsection{Variance Bound}\label{sec:variance}

\begin{lemma}[Variance term]\label{lem:variance}
Under Assumptions \ref{ass:sc}--\ref{ass:lr}, provided the
block-growth exponent satisfies the \textbf{mixing condition}
\begin{equation}\label{eq:mixing-cond}
\frac{\beta}{\beta+1} \;>\; \alpha
\qquad\Longleftrightarrow\qquad
\beta \;>\; \frac{\alpha}{1-\alpha},
\end{equation}
the variance term of the $\rho$-estimator satisfies, for any fixed
$\rho\in(0,1]$ and fixed dimension $d$,
\begin{equation}\label{eq:variance-bound}
\E\!\left[\opnorm{T_1}\right]
\;=\; O\!\left(K_n^{-1/2}\right)
\;=\; O\!\left(b_n^{-1/2}\right)
\;=\; O\!\left(n^{-1/(2(\beta+1))}\right).
\end{equation}
\end{lemma}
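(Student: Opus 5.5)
The plan is to bound $\E\opnorm{T_1}$ by its Frobenius norm, which costs only a constant factor since $d$ is fixed, and then by Jensen's inequality by $\bigl(\E\norm{T_1}_F^2\bigr)^{1/2}$. Writing $Z_m = Y_m^{*}(Y_m^{*})^{\top} - V^{(m)}$, we have
\[
\E\norm{T_1}_F^2 = \frac{1}{K_n^2}\sum_{m,m'=m_0}^{b_n}\E\langle Z_m, Z_{m'}\rangle_F .
\]
It then suffices to show two things: the diagonal terms are uniformly bounded, and the off-diagonal covariances are summable. Together these give $\E\norm{T_1}_F^2 = O(1/K_n)$. The identities $K_n=\lfloor\rho b_n\rfloor = \Theta(b_n)$ for fixed $\rho$ and $b_n=\Theta(n^{1/(\beta+1)})$ then yield the three stated forms of the rate.

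\emph{Step 1 (diagonal terms).} Because the stationary linearized process \eqref{eq:lin-iterate} is driven linearly by the $\xi_k$, each $Y_m^{*}$ is a linear combination of the noise vectors. For such a combination, Assumptions \ref{ass:subg}--\ref{ass:fourth} give $\E\norm{Y_m^{*}}^4 \lesssim (\tr V^{(m)})^2$. I would show that $V^{(m)}$ is bounded uniformly in $m$ by writing $\sum_{k}(x_k^{\mathrm{lin}}-x^{*})$ as $\sum_k \eta_k^{-1}(\delta_k-\delta_{k+1})$ minus $H^{-1}$-weighted noise, using the standard summation-by-parts identity $H\delta_k = \eta_k^{-1}(\delta_k-\delta_{k+1}) - \xi_k$. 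The noise part $H^{-1}a_m^{-1/2}\sum\xi_k$ has covariance exactly $V$. The telescoping part has second moment of order $a_m^{-1}\bigl(\eta_{t_m}^{-2}t_m^{-\alpha}+\sum_k (\eta_k^{-1}-\eta_{k-1}^{-1})^2\cdots\bigr)$, which is $O(t_m^{\alpha}/a_m)$. Hence $\E\norm{Z_m}_F^2 = O(1)$.

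\emph{Step 2 (off-diagonal decorrelation).} This is the main obstacle. For $m<m'$, I split $Y_{m'}^{*}$ into a part measurable with respect to noise after time $t_{m'-1}$, which is independent of $Y_m^{*}$, and a part carried over from $\delta_{t_{m'-1}}$ through the product $\prod(I-\eta_k H)$. The carried-over part equals $a_{m'}^{-1/2}\sum_{k}\Pi_{k}\,\delta_{t_{m'-1}}$. Its coefficient is bounded by $a_{m'}^{-1/2}\eta_{t_{m'}}^{-1}\mu^{-1}$, so its contribution has norm of order $a_{m'}^{-1/2} t_{m'}^{\alpha}\cdot t_{m'}^{-\alpha/2} = (t_{m'}^{\alpha}/a_{m'})^{1/2}$. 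In addition, $\delta_{t_{m'-1}}$ retains only an exponentially small memory, of order $\exp(-c\sum_{t_m}^{t_{m'-1}}\eta_k)$, of the block-$m$ noise. Consequently $|\E\langle Z_m,Z_{m'}\rangle|$ is bounded by $C\,\gamma_{m'}$ with $\gamma_{m'} = (t_{m'}^{\alpha}/a_{m'})^{1/2}$, plus a geometric term for $m'=m+1$, and is zero beyond short lags once the memory decays. This is where the mixing condition \eqref{eq:mixing-cond} enters. Since $t_m^{\alpha}/a_m \asymp m^{\alpha(\beta+1)-\beta}\to0$ exactly when $\beta/(\beta+1)>\alpha$, the mixing time $\tau_m\asymp t_m^{\alpha}$ is $o(a_m)$. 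Each block then decorrelates from all but $O(1)$ neighbors, and the residual cross-covariance is $O(\tau_m/a_m)$. I would first try to make the decay summable by showing that the block-$m$ memory remaining at time $t_{m'-1}$ is at most $\exp(-c\,(t_{m'-1}-t_m)/t_{m'}^{\alpha})$; when $m'\ge m+2$ this is super-polynomially small because $a_{m+1}\gg t_{m+1}^{\alpha}$.

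\emph{Step 3 (assembly).} Summing over pairs gives
\[
\E\norm{T_1}_F^2 \le K_n^{-2}\Bigl(\sum_m O(1) + \sum_{m} O(1)\cdot\#\{\text{neighbors}\}\Bigr) = O(K_n^{-1}).
\]
Taking square roots and applying $K_n\asymp b_n\asymp n^{1/(\beta+1)}$ completes the proof of \eqref{eq:variance-bound}.
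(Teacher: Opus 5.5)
Your proposal is correct, and it takes a genuinely different route from the paper's.

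\textbf{Comparison of the two routes.} The paper bounds $\E\opnorm{T_1}$ with a matrix Bernstein inequality for weakly dependent sequences, using Paulin's Marton-coupling extension. It feeds that inequality two inputs: a uniform bound on $\E\opnorm{Z_m}^2$, and exponentially decaying inter-block mixing coefficients taken from the forgetting bound \eqref{eq:forgetting}. You instead pass to $\E\norm{T_1}_F^2$ and sum the cross-moments $\E\langle Z_m,Z_{m'}\rangle_F$ directly. That needs only the forgetting bound and the independence of future noise from $\mathcal{F}_{t_m}$.
\begin{itemize}
\item What the paper's route buys: a $\sqrt{\log d}$ rather than $\sqrt{d}$ dimension factor, and in principle tail bounds. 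Its cost is that the hypotheses of the dependent matrix Bernstein inequality (control of the unbounded summands, the mixing structure of the block sequence) are invoked rather than checked.
\item What your route buys: it is self-contained and gives exactly the $L^2$ bound the lemma needs for fixed $d$. Your summation-by-parts identity $H\delta_k=\eta_k^{-1}(\delta_k-\delta_{k+1})-\xi_k$ also supplies the uniform bounds on $V^{(m)}$ and $\E\norm{Y_m^*}^4$, which the paper's Step~2 asserts without derivation.
\item Your computation also shows explicitly that adjacent blocks are correlated at order $t_m^{\alpha}/a_m$, which is polynomially rather than exponentially small. This is harmless because only $O(1)$ neighbours are involved.
\end{itemize}

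\textbf{One step to tighten.} As written, the bound $|\E\langle Z_m,Z_{m'}\rangle_F|\le C(t_{m'}^{\alpha}/a_{m'})^{1/2}$ does not decay in the lag. At $\beta^\dagger$ it is $\asymp m'^{-1/4}$, so summing it over all pairs gives only $\E\norm{T_1}_F^2=O(K_n^{-1/4})$. The memory factor must therefore multiply this bound, not sit beside it.

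The clean way is to split $Y_{m'}^*=U+W$ at time $t_m$ rather than at $t_{m'-1}$. Here $W=a_{m'}^{-1/2}\sum_{k\in B_{m'}}\Phi_{t_m,k}\,\delta_{t_m}$, and $U$ is built from $\xi_{t_m},\xi_{t_m+1},\ldots$. So $U$ is mean zero and independent of $\mathcal{F}_{t_m}$, while $Z_m$ and $W$ are $\mathcal{F}_{t_m}$-measurable. Hence $\E\langle Z_m,UU^\top\rangle_F=0$ and $\E\langle Z_m,UW^\top\rangle_F=0$ exactly, and Cauchy--Schwarz gives
\[
|\E\langle Z_m,Z_{m'}\rangle_F|=|\E\langle Z_m,WW^\top\rangle_F|
\lesssim \opnorm{\Phi_{t_m,t_{m'-1}+1}}^{2}\Bigl(\frac{t_{m'}}{t_m}\Bigr)^{\alpha}\frac{t_{m'}^{\alpha}}{a_{m'}} .
\]
\begin{itemize}
\item For $m'=m+1$, this is $O(t_{m+1}^{\alpha}/a_{m+1})$.
\item For $m'\ge m+2$, the factor $\opnorm{\Phi_{t_m,t_{m'-1}+1}}\le C_\Phi\exp\bigl(-c\,(t_{m'-1}^{1-\alpha}-t_m^{1-\alpha})\bigr)$ beats the polynomial prefactor. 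It is summable in $m'$ uniformly in $m$, because $t_j^{1-\alpha}-t_{j-1}^{1-\alpha}\asymp j^{\beta(1-\alpha)-\alpha}\to\infty$ under \eqref{eq:mixing-cond}.
\end{itemize}
With this refinement, your assembly step gives $\E\norm{T_1}_F^2=O(K_n^{-1})$ as claimed.
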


\begin{proof}
We bound $\E[\opnorm{T_1}]$ via a matrix Bernstein-type argument.
Define $Z_m = Y_m^{*} (Y_m^{*})^{\top} - V^{(m)}$.  Let
$\mathcal{F}_t = \sigma(\zeta_0,\ldots,\zeta_{t-1})$ denote the
filtration generated by the noise sequence.  By construction,
$\E[Z_m \mid \mathcal{F}_{t_{m-1}}]=0$ up to terms that are
exponentially small in the gap between blocks.

\emph{Step 1: Approximate independence.}
The transition matrix of the linearized recursion is
$\Phi_{s,t}=\prod_{k=s}^{t-1}(I-\eta_k H)$.  Under the step-size
schedule \eqref{eq:stepsize} with $\eta_0>1/(2\mu)$, one has the
exponential forgetting bound
\begin{equation}\label{eq:forgetting}
\opnorm{\Phi_{s,t}}
\;\le\; C_\Phi\,\exp\!\left(-c_\mu\sum_{k=s}^{t-1}\eta_k\right)
\;\le\; C_\Phi\,\left(\frac{s}{t}\right)^{c_\mu\eta_0/(1-\alpha)}.
\end{equation}
The gap between blocks $m-1$ and $m$ spans iterations in
$[t_{m-1}-a_{m-1},\,t_{m-1}]$.  Since the blocks are growing, the
gap provides sufficient mixing: the dependence between $Z_m$ and
$Z_{m'}$ for $|m-m'|\ge 2$ decays exponentially.  Formally, define the
mixing coefficient $\phi_{\ell}$ between blocks separated by
$\ell$ intermediate blocks.  By \eqref{eq:forgetting} and the growth of
block sizes, $\phi_\ell \le C\exp(-c\,\ell)$ for constants depending on
$\mu,\eta_0,\alpha$.
The mixing condition \eqref{eq:mixing-cond} ensures that
$a_m\gg\tau_{\mathrm{mix}}(t_{m-1})$, i.e., each block
contains many mixing times: the ratio
$a_m/\tau_{\mathrm{mix}}(t_{m-1})\sim m^{\beta(1-\alpha)-\alpha}$
diverges when $\beta>\alpha/(1-\alpha)$.

\emph{Step 2: Moment bounds.}
For each block $m$, the fourth-moment bound \ref{ass:fourth} and the
iterate rate \eqref{eq:iterate-rate} give
\[
\E\!\left[\opnorm{Z_m}^{2}\right]
\;\le\; \E\!\left[\opnorm{Y_m Y_m^{\top}}^{2}\right]
+ \opnorm{V^{(m)}}^{2}
\;\le\; C_{Z}\,\left(\tr(V)^2 + M_4\right),
\]
where $C_Z$ is a universal constant.  In particular, the second moment is
uniformly bounded across $m$.

\emph{Step 3: Matrix Bernstein inequality.}
Applying the matrix Bernstein inequality for weakly dependent sequences
(\citealt{tropp2015introduction} for the independent case;
\citealt{paulin2015concentration} for the extension to geometrically
ergodic chains via Marton coupling, which applies here since the
$\beta$-mixing coefficients decay exponentially by Step~1), we obtain
\[
\E\!\left[\opnorm{\frac{1}{K_n}\sum_{m=m_0}^{b_n}Z_m}\right]
\;\le\; C\,\sqrt{\frac{\log d}{K_n}}
\;+\; C'\,\frac{\log d}{K_n}
\;+\; \frac{C''}{K_n}\sum_{\ell=1}^{K_n}\phi_\ell.
\]
The mixing sum $\sum_\ell\phi_\ell$ is bounded by a constant (geometric
series), so the third term is $O(1/K_n)$, which is dominated by the
first.  The second term is dominated by the first for $K_n\gg\log d$,
and $K_n=\Theta(b_n)=\Theta(n^{1/(\beta+1)})$.  We absorb the $\log d$ factor into
the constant (treating $d$ as fixed) to obtain
\[
\E\!\left[\opnorm{T_1}\right]
\;=\; O\!\left(K_n^{-1/2}\right)
\;=\; O\!\left(b_n^{-1/2}\right)
\;=\; O\!\left(n^{-1/(2(\beta+1))}\right). \qedhere
\]
\end{proof}

\subsection{Stationarity Bias}\label{sec:stationarity}

\begin{lemma}[Stationarity bias]\label{lem:bias}
Under Assumptions \ref{ass:sc}--\ref{ass:lr}, define the
\emph{mixing time} at the start of block~$m$ as
$\tau_m := 1/(\mu\,\eta_0)\,t_{m-1}^{\alpha}$.
The per-block stationarity bias satisfies
\begin{equation}\label{eq:per-block-bias}
\opnorm{V^{(m)} - V}
\;=\; O\!\left(\frac{\tau_m}{a_m}\right)
\;=\; O\!\left(\frac{t_{m-1}^{\alpha}}{a_m}\right).
\end{equation}
Equivalently, since $t_{m-1}=\Theta(m^{\beta+1})$ and
$a_m=\Theta(m^{\beta})$, we have
$\opnorm{V^{(m)}-V}=O(m^{-\gamma})$ with
$\gamma := \beta(1-\alpha)-\alpha > 0$ under the mixing
condition~\eqref{eq:mixing-cond}.

For the $\rho$-estimator, the averaged stationarity bias satisfies:
\begin{enumerate}[(i)]
\item If $\rho=1$ \textup{(no burn-in)}, $m_0=1$ and
\begin{equation}\label{eq:avg-bias-full}
\opnorm{T_2}
\;=\;
\begin{cases}
O\!\left(n^{-\gamma/(\beta+1)}\right) & \text{if } \gamma < 1,\\[3pt]
O\!\left(n^{-1/(\beta+1)} \log n \right) & \text{if } \gamma = 1, \\[3pt]
O\!\left(n^{-1/(\beta+1)}\right) & \text{if } \gamma \ge 1.
\end{cases}
\end{equation}
\item If $\rho\in(0,1)$ \textup{(burn-in)}, $m_0=(1-\rho)b_n+O(1)$ and
\begin{equation}\label{eq:avg-bias-burnin}
\opnorm{T_2}
\;=\; O\!\left(n^{-\gamma/(\beta+1)}\right).
\end{equation}
\end{enumerate}
When $\gamma\ge 1$, burn-in improves $T_2$ from
$O(n^{-1/(\beta+1)})$ to $O(n^{-\gamma/(\beta+1)})$.
\end{lemma}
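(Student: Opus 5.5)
The plan has two parts: first prove the per-block bound \eqref{eq:per-block-bias} from the covariance structure of the linearized process \eqref{eq:lin-iterate}, then sum it over the retained blocks.

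\emph{Per-block bound.} Within block $m$ I write the autocovariance of the stationary linear process as $\Gamma_{k}=\E[(x_k^{\mathrm{lin}}-x^*)(x_k^{\mathrm{lin}}-x^*)^\top]$. The local stationary solution of the Lyapunov equation gives $\Gamma_k\approx\eta_k\Sigma_\infty$ with $H\Sigma_\infty+\Sigma_\infty H=S$. The cross-covariances satisfy $\E[(x_j^{\mathrm{lin}}-x^*)(x_k^{\mathrm{lin}}-x^*)^\top]=\Phi_{k,j}\Gamma_k$ for $j\ge k$. I then expand
\[
V^{(m)}=\frac1{a_m}\sum_{k,j}\mathrm{Cov}_{jk}
=\frac1{a_m}\sum_{k}\Bigl(\Gamma_k+\sum_{j>k}\Phi_{k,j}\Gamma_k+\sum_{j>k}\Gamma_k\Phi_{k,j}^\top\Bigr),
\]
with the inner sums truncated at the block end $t_m$. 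If the sums over $j$ are extended to infinity, the inner sum behaves like $\sum_{j>k}\Phi_{k,j}\approx(\eta_kH)^{-1}$. Each $k$ then contributes $\eta_k\bigl(\Sigma_\infty+(\eta_kH)^{-1}\Sigma_\infty+\Sigma_\infty(\eta_kH)^{-1}\bigr)\approx H^{-1}SH^{-1}=V$ up to $O(\eta_k)$ corrections, using the Lyapunov identity. Two error sources remain. The first is the boundary truncation: for $k$ within roughly $\tau_m\asymp 1/\eta_{t_{m-1}}$ of the block end, the geometric tail $\sum_{j>t_m}\Phi_{k,j}$ is lost. Each such $k$ contributes $O(1)$, giving $O(\tau_m/a_m)$ after normalization. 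The second is drift of $\eta_k$ across the block plus the $O(\eta_k)$ corrections, which are $O(t_{m-1}^{-\alpha})$ and $O(a_m/t_{m-1})$ and hence lower order. I will control the truncated tails with the forgetting bound \eqref{eq:forgetting}, and the same bound makes $\sum_{j>k}\Phi_{k,j}$ equal to $O(\tau_m)$. This yields $\opnorm{V^{(m)}-V}=O(t_{m-1}^\alpha/a_m)=O(m^{(\beta+1)\alpha-\beta})=O(m^{-\gamma})$.

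\emph{Averaging.} Since $T_2$ is deterministic, $\opnorm{T_2}\le K_n^{-1}\sum_{m=m_0}^{b_n}Cm^{-\gamma}$.
\begin{itemize}
\item[(i)] For $\rho=1$ we have $K_n=b_n$, and the sum is $O(b_n^{1-\gamma})$, $O(\log b_n)$ or $O(1)$ according as $\gamma<1$, $\gamma=1$ or $\gamma>1$. Dividing by $b_n=\Theta(n^{1/(\beta+1)})$ gives the three cases of \eqref{eq:avg-bias-full}, with $\log b_n\asymp\log n$.
\item[(ii)] For $\rho<1$, every retained $m$ satisfies $m\ge(1-\rho)b_n$, so each term is $O(((1-\rho)b_n)^{-\gamma})$. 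The average is therefore $O(b_n^{-\gamma})=O(n^{-\gamma/(\beta+1)})$ for fixed $\rho$, which is \eqref{eq:avg-bias-burnin}.
\end{itemize}
The final remark follows by comparing exponents when $\gamma\ge1$.

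\emph{Main obstacle.} The hard step is making the per-block identity $V^{(m)}=V+O(\tau_m/a_m)$ rigorous with a time-varying step size. Specifically, $\Gamma_k$ is not exactly $\eta_k\Sigma_\infty$, and $\sum_{j>k}\Phi_{k,j}$ is not exactly $(\eta_kH)^{-1}$. I would handle this in three moves:
\begin{enumerate}
\item Diagonalize $H$, which reduces the problem to scalar recursions in each eigendirection.
\item Solve the variance recursion $\gamma_{k+1}=(1-\eta_k\lambda)^2\gamma_k+\eta_k^2s$ explicitly up to relative error $O(\eta_k+1/(k\eta_k))$. The term $1/(k\eta_k)=k^{\alpha-1}$ comes from the relative variation of $\eta_k$ over one mixing time.
\item Check that these relative errors are $O(\tau_m/a_m)$. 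Both are, because $\eta_{t_{m-1}}\lesssim\tau_m/a_m$ and $t_{m-1}^{\alpha-1}\lesssim t_{m-1}^{\alpha}/a_m$ (since $a_m\le t_m$).
\end{enumerate}
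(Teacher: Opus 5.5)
Your proposal follows essentially the same route as the paper. It uses the $\Gamma_k$/$\Phi_{k,l}$ expansion of $V^{(m)}$, the Lyapunov identity making the long-run variance equal to $V$ independently of the step size, and the end-of-block truncation (the paper's correction $\frac{1}{a}\sum_{s}|s|C_m(s)$ plus tail) to get $O(\tau_m/a_m)$. It then adds a drift correction of relative order $1/(k\eta_k)$, matching the paper's adiabatic-lag bound $O(\tau_m/t_{m-1})$, and does the identical averaging over blocks. One correction is needed: the block-scale bound $O(a_m/t_{m-1})\asymp m^{-1}$ in your first paragraph is not lower order than $\tau_m/a_m\asymp m^{-\gamma}$ when $\gamma>1$ (their ratio is $m^{\gamma-1}$). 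So for the $\gamma\ge 1$ claims you must rely only on your sharper local estimate $O(\eta_k+1/(k\eta_k))$, which is what your argument actually yields, since a change of $\eta$ across the block only perturbs the $O(\eta_k)$ term.
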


\begin{proof}
\emph{Step~1: Per-block bias (proof of \eqref{eq:per-block-bias}).}

We work within block~$m$ and write
$B_m = \{t_{m-1}+1,\ldots,t_m\}$, so $|B_m|=a_m$.
Write $\delta_k = x_k^{\mathrm{lin}}-x^{*}$ for the stationary
linearized deviation.  The per-block population covariance is
\[
V^{(m)}
= \frac{1}{a_m}\!\sum_{k,l\in B_m}\!\E[\delta_k\delta_l^{\top}]
= \frac{1}{a_m}\!\sum_{k\in B_m}\Gamma_k
\;+\; \frac{2}{a_m}\!\sum_{\substack{k,l\in B_m\\ k<l}}
\!\!\mathrm{Re}\!\left(\Phi_{k,l}\,\Gamma_k\right),
\]
where $\Gamma_k=\E[\delta_k\delta_k^{\top}]$ is the marginal
covariance at time~$k$,
$\Phi_{k,l}=\prod_{j=k}^{l-1}(I-\eta_j H)$ is the transition matrix,
and $\mathrm{Re}(M)=(M+M^{\top})/2$.  The second line uses
$\E[\delta_k\delta_l^{\top}]=\Phi_{k,l}\,\Gamma_k$ for $k<l$,
which holds because $\xi_j$ for $j\ge k$ is independent
of~$\delta_k$.

\emph{Step~1a: Frozen-step-size approximation.}
Fix a reference step size $\bar\eta_m = \eta_0\,t_{m-1}^{-\alpha}$
(the step size at the start of block~$m$) and let
$A_m = I-\bar\eta_m H$.  Write
$\bar{\Gamma}_m$ for the stationary covariance of the homogeneous
recursion $\delta_{k+1}=(I-\bar\eta_m H)\delta_k - \bar\eta_m\xi_k$,
so that $\bar{\Gamma}_m$ satisfies the discrete Lyapunov equation
\begin{equation}\label{eq:lyapunov}
\bar{\Gamma}_m - A_m\,\bar{\Gamma}_m\,A_m^{\top}
\;=\; \bar\eta_m^{2}\,S.
\end{equation}
For small $\bar\eta_m$, $\bar{\Gamma}_m = \bar\eta_m\,\Gamma_*
+ O(\bar\eta_m^{2})$, where $\Gamma_*$ is defined by the continuous
Lyapunov equation $H\Gamma_*+\Gamma_* H = S$, i.e.,
$\Gamma_* = \int_0^{\infty}e^{-Hs}S\,e^{-Hs}\,ds$.

For this frozen process, the spectral density at frequency zero (the
long-run variance) is
\[
V_{\bar\eta_m}
= (I-A_m)^{-1}\bar\eta_m^{2}\,S\,(I-A_m^{\top})^{-1}
= (\bar\eta_m H)^{-1}\bar\eta_m^{2}\,S\,(\bar\eta_m H)^{-1}
= H^{-1}S\,H^{-1} = V,
\]
using~\eqref{eq:lyapunov} and $(I-A_m)^{-1}=(\bar\eta_m H)^{-1}$.
Hence $V$ is independent of the step size, as expected.

\emph{Step~1b: Finite-block-size correction.}
For the frozen process with transition matrix $A_m$ and stationary
covariance $\bar{\Gamma}_m$, the autocovariance at lag~$s\ge 0$ is
$C_m(s)=A_m^{s}\,\bar{\Gamma}_m$. Derived from the Wiener-Khinchin Theorem, we have
\[
  V_{\bar\eta_m} = \sum_{s = -\infty}^{\infty} C_m(s)e^{-i \omega s}|_{\omega = 0} = \sum_{s = -\infty}^{\infty} C_m(s).
\]
The block covariance of size~$a$
from a stationary sample is
\[
V_{m}^{(a)}
= \sum_{s=-(a-1)}^{a-1}\!\left(1-\frac{|s|}{a}\right)C_m(s).
\]
Define $\Delta_m^{(a)}:= V - V_{m}^{(a)} \stackrel{\text{Step 1a}}{=} V_{\bar\eta_m} - V_{m}^{(a)}.$
Consequently, we have
\begin{equation}\label{eq:finite-block-correction}
  \begin{aligned}
    \Delta_m^{(a)} &= \sum_{|s| \geq a} C_m(s)+\frac{1}{a} \sum_{|s|<a}|s| C_m(s) \\
    &= \frac{1}{a}\sum_{s=1}^{a-1}s\,\bigl[A_m^{s}\,\bar{\Gamma}_m
    + \bar{\Gamma}_m\,(A_m^{\top})^{s}\bigr]
  + \sum_{|s|\ge a}C_m(s).
  \end{aligned}
\end{equation}
The tail sum $\sum_{|s|\ge a}$, dominated by $e^{-a \bar{\eta}_m \mu}$, is exponentially small in
$a\,\bar\eta_m\mu$ and negligible under the mixing condition $\beta > \alpha/(1-\alpha)$ \eqref{eq:mixing-cond}, as $m \rightarrow \infty$.
For the leading term, using
$\sum_{s=1}^{\infty}s\,A_m^{s}=A_m(I-A_m)^{-2}
=(I-\bar\eta_m H)(\bar\eta_m H)^{-2}$:
\[
\Delta_m^{(a)}
\;\approx\;
\frac{1}{a}\bigl[(I-\bar\eta_m H)(\bar\eta_m H)^{-2}\bar{\Gamma}_m
  + \bar{\Gamma}_m(\bar\eta_m H)^{-2}(I-\bar\eta_m H)\bigr].
\]
Substituting $\bar{\Gamma}_m = \bar\eta_m\,\Gamma_*+O(\bar\eta_m^{2})$
and $(\bar\eta_m H)^{-2} = \bar\eta_m^{-2}H^{-2}$, we have
\begin{align*}
    &(I-\bar\eta_m H)(\bar\eta_m H)^{-2}\bar{\Gamma}_m\\ =& \left(I-\bar{\eta}_m H\right)\left(\bar{\eta}_m^{-2} H^{-2}\right) \bar{\Gamma}_m\\
    =& \left(\bar{\eta}_m^{-2} H^{-2}-\bar{\eta}_m^{-1} H^{-1}\right)\left(\bar{\eta}_m \Gamma_*+O\left(\bar{\eta}_m^2\right)\right)\\
    =& \bar{\eta}_m^{-1} H^{-2} \Gamma_*+H^{-2} O(1)-H^{-1} \Gamma_*-\bar{\eta}_m H^{-1} O(1)\\
    \stackrel{\bar{\eta}_m \rightarrow 0}{=}& \bar{\eta}_m^{-1} H^{-2} \Gamma_*.
\end{align*}
Similarly, we have $\lim_{\bar{\eta}_m \rightarrow 0}\bar{\Gamma}_m\left(\bar{\eta}_m H\right)^{-2}\left(I-\bar{\eta}_m H\right) = \bar{\eta}_m^{-1} \Gamma_* H^{-2}$.
Consequently,
\[
\lim_{m \rightarrow \infty}\opnorm{\Delta_m^{(a)}}
\;\le\;
\frac{C_\Delta}{a\,\bar\eta_m\,\mu^2}
\;=\; \frac{C_\Delta\,\tau_m}{a},
\]
where $\tau_m = 1/(\mu\,\bar\eta_m) = t_{m-1}^{\alpha}/(\mu\eta_0)$
is the mixing time at step size~$\bar\eta_m$.

\emph{Step~1c: Step-size variation within the block.}
Within block~$m$, the step size varies from
$\eta_{t_{m-1}+1}\approx\bar\eta_m$ to
$\eta_{t_m}\approx\eta_0\,t_m^{-\alpha}$.  The relative change is
$|\eta_{t_m}-\eta_{t_{m-1}}|/\bar\eta_m = O(a_m/t_{m-1})=O(1/m)$.
The marginal covariance $\Gamma_k$ of the non-homogeneous (but
locally equilibrated) process tracks the instantaneous equilibrium
$\bar{\Gamma}(\eta_k)$ with an adiabatic lag of order
$\|\Gamma_k - \bar{\Gamma}(\eta_k)\|
= O(\tau_k\cdot\opnorm{\dot{\bar{\Gamma}}})
= O(\tau_k\cdot\eta_k/k)
= O(1/(\mu k))$,
where $\tau_k := 1/(\mu\eta_k)$ is the mixing time at step
size~$\eta_k$.
Consequently, the worst-case adiabatic lag for the entire block is bounded by its starting point:
\[
  \mathcal{O}\left(\frac{1}{\mu t_{m-1}}\right).
\]
The resulting correction to $V^{(m)}$ is
\begin{align*}
  \left\|V^{(m)}-V_m^{(a)}\right\|_{\mathrm{op}}
  = O\!\left(\tau_m \cdot\frac{1}{\mu\,t_{m-1}}\right)
  = O\!\left(\frac{\tau_m}{t_{m-1}}\right)
\;\ll\; O\!\left(\frac{\tau_m}{a_m}\right),
\end{align*}
since $a_m \ll t_{m-1}$.  Thus the step-size variation correction
is dominated by the finite-block-size correction.

\emph{Step~1d: Combining.}
The dominant contribution to $\|V^{(m)}-V\|$ is the finite-block-size
correction of Step~1b:
\[
\opnorm{V^{(m)}-V}
\;\le\; \frac{C_B\,\tau_m}{a_m}
\;=\; \frac{C_B}{\mu\eta_0}\cdot\frac{t_{m-1}^{\alpha}}{a_m}.
\]
Since $t_{m-1}=\Theta(m^{\beta+1})$ and $a_m=\Theta(m^{\beta})$,
this equals $O(m^{(\beta+1)\alpha-\beta})=O(m^{-\gamma})$ with
$\gamma=\beta(1-\alpha)-\alpha$.  Under the mixing condition
$\beta>\alpha/(1-\alpha)$, we have $\gamma>0$ and the per-block
bias decays to zero.

\emph{Step~2: Averaging over blocks.}

\emph{Case~(i): $\rho=1$ (no burn-in).}
When $m_0=1$ and $K_n=b_n$, we have
\[
\opnorm{T_2}
\;\le\; \frac{1}{b_n}\sum_{m=1}^{b_n}\opnorm{V^{(m)}-V}
\;\le\; \frac{C_B}{b_n}\!\left[
  O(1)+\sum_{m=2}^{b_n}O(m^{-\gamma})\right],
\]
where the $O(1)$ accounts for the first block (at which
$t_0=0$ and the bias is bounded but non-decaying).

If $\gamma<1$:
$\sum_{m=1}^{b_n}m^{-\gamma}\asymp b_n^{1-\gamma}$, so
\[
\opnorm{T_2} = O(b_n^{-\gamma})
= O\!\left(n^{-\gamma/(\beta+1)}\right).
\]

If $\gamma = 1$: $\opnorm{T_2} = \mathcal{O}\left(\frac{\ln b_n}{b_n}\right) = O(\ln n \cdot n^{-1/(\beta+1)}).$

If $\gamma > 1$: the sum converges and the $O(1)$ early-block
contribution dominates, giving
$\opnorm{T_2}=O(1/b_n)=O(n^{-1/(\beta+1)})$.

\emph{Case~(ii): $\rho\in(0,1)$ (burn-in).}
The retained blocks satisfy $m\ge m_0 = (1-\rho)b_n+O(1)
=\Theta(b_n)$.  Every retained block has
$m^{-\gamma}=\Theta(b_n^{-\gamma})$, so
\[
\opnorm{T_2}
\;\le\;\frac{1}{K_n}\sum_{m=m_0}^{b_n}O(m^{-\gamma})
\;=\; O(b_n^{-\gamma})
\;=\; O\!\left(n^{-\gamma/(\beta+1)}\right).
\]
When $\gamma\ge 1$, this improves over the no-burn-in rate
$O(n^{-1/(\beta+1)})$ by eliminating the $O(1)$-biased early
blocks.\qedhere
\end{proof}

\subsection{Nonlinearity Bias}\label{sec:nonlinearity}

\begin{lemma}[Nonlinearity bias]\label{lem:nonlin}
Under Assumptions \ref{ass:sc}--\ref{ass:lr}, the nonlinearity
bias satisfies
\begin{equation}\label{eq:nonlin-bound}
\E\!\left[\opnorm{T_3}\right]
\;=\; O\!\left(n^{-\alpha\beta/(2(\beta+1))}\right).
\end{equation}
For $\alpha>1/2$, this term is dominated by $T_1$ and $T_2$.
\end{lemma}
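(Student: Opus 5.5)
We bound $T_3$ by comparing the actual block sums $Y_m$ with the ideal ones $Y_m^{*}$ and writing
\[
T_3=\frac{1}{K_n}\sum_{m=m_0}^{b_n}\bigl(Y_mY_m^{\top}-Y_m^{*}(Y_m^{*})^{\top}\bigr).
\]
We split $Y_m-Y_m^{*}=D_m+E_m+G_m$ into three pieces:
\begin{itemize}
\item $D_m=a_m^{-1/2}\sum_{k\in B_m}(\delta_k-\delta_k^{\mathrm{lin}})$, the nonlinear remainder carried along the recursion;
\item $E_m=a_m^{-1/2}\sum_{k\in B_m}\Phi_{0,k}(\delta_0-\delta^{\mathrm{lin}}_0)$, the transient from the non-stationary start;
\item $G_m=-\sqrt{a_m}\,(\xbar-x^{*})$, the centering correction.
\end{itemize}
Using $uu^{\top}-vv^{\top}=(u-v)u^{\top}+v(u-v)^{\top}$ and Cauchy--Schwarz, we get
\[
\E\opnorm{T_3}\le \frac{1}{K_n}\sum_{m}\Bigl(\E\norm{Y_m-Y_m^{*}}^2\Bigr)^{1/2}\Bigl[\bigl(\E\norm{Y_m}^2\bigr)^{1/2}+\bigl(\E\norm{Y_m^{*}}^2\bigr)^{1/2}\Bigr].
\]
The bracket is $O(1)$ by Lemma~\ref{lem:variance} (Step~2), so the whole task reduces to bounding $\E\norm{Y_m-Y_m^{*}}^2$ blockwise.

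\textbf{Nonlinear piece.} Subtracting \eqref{eq:lin-iterate} from \eqref{eq:linear-recursion} shows that $e_k:=\delta_k-\delta_k^{\mathrm{lin}}$ satisfies $e_{k+1}=(I-\eta_kH)e_k-\eta_k r_k$ (plus the transient). Hence $e_k=-\sum_{j<k}\eta_j\Phi_{j+1,k}r_j$. Summing over $k\in B_m$ and using \eqref{eq:forgetting}, the kernel satisfies $\sum_{k\ge j}\eta_j\opnorm{\Phi_{j+1,k}}=O(1)$ (the accumulated weight is $\eta_j\tau_j=O(1/\mu)$). Therefore
\[
\norm{D_m}\lesssim a_m^{-1/2}\sum_{j\le t_m}\omega_{j,m}\norm{r_j},
\]
with weights $\omega_{j,m}$ that are $O(1)$ for $j\in B_m$ and decay polynomially for $j<t_{m-1}$. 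Applying Minkowski with \eqref{eq:remainder-bound}, $\E\norm{r_j}^2=O(j^{-2\alpha})$, gives
\[
\bigl(\E\norm{D_m}^2\bigr)^{1/2}\lesssim a_m^{-1/2}\sum_{j\in B_m}j^{-\alpha}\approx a_m^{1/2}t_{m-1}^{-\alpha}.
\]
The pre-block contribution is of smaller order, by the decay in \eqref{eq:forgetting}. This is the crude route; a sharper route is to use the martingale part of $r_j-R(\delta_j)$, whose increments are orthogonal. That decomposition gives the noise-field part the bound $a_m^{-1/2}\bigl(\sum_{j\in B_m}\eta_j^{2}\tau_j^{2}j^{-\alpha}\bigr)^{1/2}\approx t_{m-1}^{-\alpha/2}$, while the deterministic Taylor part $R$ contributes $a_m^{1/2}t_{m-1}^{-\alpha}$.

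\textbf{Transient and centering pieces.} The term $E_m$ is polynomially small by \eqref{eq:forgetting}, with exponent $c_\mu\eta_0/(1-\alpha)$ made large by \ref{ass:lr}, so it is negligible for $m\ge m_0$. For $G_m$ we use \eqref{eq:centering-bias}, as already announced in Section~\ref{sec:bm}. A direct way to see it: the cross terms $\frac1{K_n}\sum_m Y_m^{*}G_m^{\top}$ telescope, because $\sum_m\sqrt{a_m}\,Y_m^{*}\approx n(\xbar-x^{*})$, and $\E\norm{\xbar-x^{*}}^2=O(1/n)$. This makes the centering contribution $O(\max_m a_m/n)=O(b_n^{-1})$ (up to $\rho$-constants), which is dominated by $T_1$.

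\textbf{Collecting.} With $t_{m-1}\asymp m^{\beta+1}$ and $m\asymp b_n\asymp n^{1/(\beta+1)}$, the dominant per-block error is $(\E\norm{Y_m-Y_m^{*}}^2)^{1/2}\lesssim t_{m-1}^{-\alpha/2}\asymp m^{-\alpha(\beta+1)/2}$. Averaging over the retained blocks then gives roughly $n^{-\alpha/2}$ for $\rho<1$, and the stated $n^{-\alpha\beta/(2(\beta+1))}$ is a weaker, hence admissible, form. For $\rho=1$, the early blocks contribute $\frac1{b_n}\sum_m m^{-\alpha\beta/2}$, which yields exactly the claimed exponent whenever $\alpha\beta/2<1$; this is how I would match \eqref{eq:nonlin-bound}.

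\textbf{Main obstacle.} The hard step is the Taylor-remainder contribution $a_m^{1/2}t_{m-1}^{-\alpha}\asymp m^{\beta/2-\alpha(\beta+1)}$, which has no martingale cancellation. We need its exponent to be below $-\alpha\beta/2$, i.e.\ $\beta(1-\alpha)<2\alpha$, and this can fail for large $\beta$. I would first try to recover the missing factor by noting that $\E[R(\delta_j)]$ is nearly constant within a block: it is $\approx \eta_j\,c$ for a fixed vector $c$ determined by $\nabla^3F(x^*)$ and $\Gamma_*$. Its block sum is then almost absorbed by the centering at $\xbar$, leaving only the fluctuation $R(\delta_j)-\E R(\delta_j)$. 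That fluctuation mixes on the scale $\tau_j$ and so sums like a martingale with variance $\tau_j\,j^{-2\alpha}$. Finally, I would check that the result is dominated by $T_1$ and $T_2$ for $\alpha>1/2$ by comparing exponents with Lemmas~\ref{lem:variance} and~\ref{lem:bias}.
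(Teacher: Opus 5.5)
There is a genuine gap. It is exactly the one you flag, and the repair you sketch does not close it.

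\textbf{The Taylor piece is not controlled.} By your own estimates the Taylor piece dominates $Y_m-Y_m^{*}$: $a_m^{1/2}t_{m-1}^{-\alpha}$ exceeds $t_{m-1}^{-\alpha/2}$ by the factor $(a_m/t_{m-1}^{\alpha})^{1/2}\asymp m^{\gamma/2}\to\infty$ under the mixing condition. So your ``Collecting'' step presupposes the repair. Without the repair, the crude bound averages (for $\rho<1$) to $n^{\beta/(2(\beta+1))-\alpha}$. That reaches the exponent in \eqref{eq:nonlin-bound} only when $\beta(1-\alpha)\le 2\alpha$, which holds at $\beta^{\dagger}$ but fails at the $\beta_0$ of Corollary~\ref{cor:no-burnin}.

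\textbf{Why the centering repair fails.} Centering at $\xbar$ removes the \emph{global} average drift, not the local one. If $\E R(\delta_j)\approx\eta_j c$, then $\E\delta_k\approx-\eta_k H^{-1}c$. Block $m$ therefore carries mean $-\sqrt{a_m}\,\eta_{t_{m-1}}H^{-1}c$, while the centering subtracts $-\sqrt{a_m}\,(n^{-1}\sum_{k\le n}\eta_k)H^{-1}c\approx-\sqrt{a_m}\,\frac{\eta_n}{1-\alpha}H^{-1}c$. On retained blocks $\eta_{t_{m-1}}/\eta_n$ ranges (up to constants) over $[1,(1-\rho)^{-\alpha(\beta+1)}]$. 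Hence the residual mean $\mu_m$ is still of order $\sqrt{a_m}\,n^{-\alpha}$; for the last block it is about $\frac{\alpha}{1-\alpha}\sqrt{a_{b_n}}\,\eta_n H^{-1}c$. The mixing of $R(\delta_j)-\E R(\delta_j)$ on scale $\tau_j$ is also only asserted, not shown.

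\textbf{Two smaller issues.} First, for $\rho=1$ and $\alpha\beta>2$, \eqref{eq:nonlin-bound} cannot hold in general, because the first block alone contributes order $1/b_n$. Second, by folding the centering into $T_3$ you import a term that your own telescoping shows is $\Theta(1/b_n)$. Keep it separate, as the paper does with $T_4$ and $T_5$.

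\textbf{What does work, and what the paper does.} Handle the mean by averaging across blocks, not by cancellation. Keep $\mu_m$ as a deterministic shift. Since the $Y_m^{*}$ are mean-zero and weakly dependent,
\[
\E\opnorm{K_n^{-1}\sum_m Y_m^{*}\mu_m^{\top}}\lesssim K_n^{-1/2}\max_m\norm{\mu_m}\asymp n^{(\beta-1)/(2(\beta+1))-\alpha},
\qquad
\opnorm{K_n^{-1}\sum_m\mu_m\mu_m^{\top}}\lesssim n^{\beta/(\beta+1)-2\alpha}.
\]
Both meet \eqref{eq:nonlin-bound} whenever $\beta(1-\alpha)\le 1+2\alpha$, which covers $\beta^{\dagger}$, $\beta_0$ and even $2/(1-\alpha)$. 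Beyond that range the cross term is generically of exactly that order when $c\neq0$, so the bound should not be expected for every admissible $\beta$.

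The paper's proof does not supply the missing idea. Its $\Delta_m=a_m^{-1/2}\sum_{k\in B_m}\eta_k\Phi_{k,t_m}r_k$ is, up to the factor $a_m^{-1/2}$, only the effect of the in-block remainders on the single iterate $\delta_{t_m}$. Their effect on the block sum carries the kernel $\eta_j\sum_{k\in B_m,\,k>j}\Phi_{j+1,k}\approx H^{-1}$ away from the block's end, which is your $O(1)$ weight. The block sum also receives remainders from earlier blocks. Dropping this factor of order $\tau_m$ is what produces the paper's per-block $t_{m-1}^{-\alpha/2}$ and hence \eqref{eq:nonlin-bound}.

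What your argument does establish is the lemma's second sentence. For $\rho<1$ the crude bound equals $n^{-1/(2(\beta+1))}\cdot n^{-(\alpha-1/2)}$. For $\rho=1$ it is $o(b_n^{-1/2})$, since the per-block exponent $\alpha+\beta(\alpha-\frac12)$ exceeds $\frac12$. The transient and centering pieces are $O(1/b_n)$. So $T_3=o(T_1)$ for every admissible $\beta$ and $\rho$, which is all the corollaries use.
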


\begin{proof}
The nonlinearity bias arises from replacing the full nonlinear SGD
recursion by its linearization \eqref{eq:linear-recursion}.  The
remainder $r_t$ contributes to each block sum $Y_m$ an additional term
\[
\Delta_m
\;=\; \frac{1}{\sqrt{a_m}}\sum_{k=t_{m-1}+1}^{t_m}
\eta_k\,\Phi_{k,t_m}\,r_k,
\]
where $\Phi_{k,t_m}$ is the transition matrix from time $k$ to $t_m$.

For the contribution to $T_3$, we bound
\[
\E\!\left[\opnorm{T_3}\right]
\;\le\;
\frac{1}{K_n}\sum_{m=m_0}^{b_n}\E\!\left[\opnorm{
\Delta_m Y_m^{\top} + Y_m\Delta_m^{\top} + \Delta_m\Delta_m^{\top}
}\right].
\]

By the Cauchy--Schwarz inequality and the bounds
$\E[\norm{r_k}^2]=O(k^{-2\alpha})$ from \eqref{eq:remainder-bound},
\[
\E\!\left[\norm{\Delta_m}^{2}\right]
\;\le\;
\frac{1}{a_m}\sum_{k=t_{m-1}+1}^{t_m}\eta_k^{2}\,
\E\!\left[\norm{r_k}^{2}\right]
\cdot
\left(\sum_{k=t_{m-1}+1}^{t_m}\opnorm{\Phi_{k,t_m}}^{2}\right).
\]
Using $\eta_k=O(k^{-\alpha})$ and $\E[\norm{r_k}^2]=O(k^{-2\alpha})$, each
summand is $O(k^{-3\alpha})$.  Summing over the block and accounting for the
transition matrix decay,
\[
\E[\norm{\Delta_m}^{2}]
\;=\; O\!\left(t_{m-1}^{-\alpha}\right),
\]
and the cross term satisfies
$\E[\opnorm{\Delta_m Y_m^{\top}}]
\le (\E[\norm{\Delta_m}^2])^{1/2}(\E[\norm{Y_m}^2])^{1/2}
= O(t_{m-1}^{-\alpha/2})$.

Averaging over blocks:
\[
\E[\opnorm{T_3}]
\;\le\;
\frac{C}{K_n}\sum_{m=m_0}^{b_n}t_{m-1}^{-\alpha/2}
\;=\; O\!\left(n^{-\alpha\beta/(2(\beta+1))}\right).
\]
Since $\alpha>1-\alpha$ for $\alpha>1/2$, this
term decays faster than the stationarity bias from
Lemma~\ref{lem:bias} and is always dominated at the optimal~$\beta$.
\end{proof}

\subsection{Convergence Rates}\label{sec:rates}

Combining the three error terms from
Lemmas~\ref{lem:variance}--\ref{lem:nonlin}, we obtain a general
convergence result for the $\rho$-estimator.

\begin{theorem}[General rate]\label{thm:general}
Under Assumptions \ref{ass:sc}--\ref{ass:lr}, the
batch-means estimator $\Sighat(\rho)$ with block-growth exponent
$\beta>\alpha/(1-\alpha)$ and burn-in fraction $\rho\in(0,1]$
satisfies
\begin{equation}\label{eq:general-bound}
\E\!\left[\opnorm{\Sighat(\rho) - V}\right]
\;\le\;
\underbrace{C_1\,n^{-1/(2(\beta+1))}}_{T_1\;\text{(variance)}}
\;+\;
\underbrace{C_2\,B(\rho,\alpha,\beta,n)}_{T_2\;\text{(stationarity bias)}}
\;+\;
\underbrace{C_3\,n^{-\alpha\beta/(2(\beta+1))}}_{T_3\;\text{(nonlinearity)}},
\end{equation}
where $T_3$ is dominated by the maximum of $T_1$ and $T_2$ at the
optimal choices of $\beta$ in
Corollaries~\ref{cor:no-burnin}--\ref{cor:main}, and with
$\gamma=\beta(1-\alpha)-\alpha$ the stationarity bias is
\[
B(\rho,\alpha,\beta,n)
\;=\;
\begin{cases}
n^{-\gamma/(\beta+1)} & \text{if } \gamma<1 \text{ or } \rho\in(0,1),\\[4pt]
n^{-1/(\beta+1)} & \text{if } \gamma\ge 1 \text{ and } \rho=1.
\end{cases}
\]
The constants $C_1,C_2,C_3$ depend on
$\mu,L,\sigma,M_4,\eta_0,\alpha,\rho,d$ but not on~$n$.
\end{theorem}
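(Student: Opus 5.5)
The proof assembles Lemmas~\ref{lem:variance}--\ref{lem:nonlin} through the three-way decomposition~\eqref{eq:three-way}. By the triangle inequality for the operator norm and linearity of expectation,
\[
\E\bigl[\opnorm{\Sighat(\rho)-V}\bigr]\le \E\bigl[\opnorm{T_1}\bigr]+\opnorm{T_2}+\E\bigl[\opnorm{T_3}\bigr].
\]
The three terms are then bounded one at a time. $T_2$ is deterministic, which is why no expectation appears on it.

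First, I will check that the hypotheses line up. The condition $\beta>\alpha/(1-\alpha)$ is exactly the mixing condition~\eqref{eq:mixing-cond}. Under it, Lemma~\ref{lem:variance} gives $\E[\opnorm{T_1}]\le C_1 n^{-1/(2(\beta+1))}$ for any fixed $\rho\in(0,1]$. The same condition gives $\gamma>0$, which is what Lemma~\ref{lem:bias} requires.

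For $T_2$, I will read off the cases of Lemma~\ref{lem:bias}.
\begin{itemize}
\item If $\rho\in(0,1)$, part~(ii) gives $n^{-\gamma/(\beta+1)}$.
\item If $\rho=1$ and $\gamma<1$, part~(i) gives $n^{-\gamma/(\beta+1)}$.
\item If $\rho=1$ and $\gamma>1$, part~(i) gives $n^{-1/(\beta+1)}$.
\end{itemize}
The boundary case $\gamma=1$, $\rho=1$ carries an extra $\log n$ factor. The theorem's display lists it under $n^{-1/(\beta+1)}$, so I would either absorb it by stating the bound with $\log n$ there, or note that it is a measure-zero choice of $\beta$ and can be avoided by perturbing $\beta$. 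I would flag this in a remark rather than hide it in $C_2$, because it is the only place where the stated form is not literally a consequence of the lemma. Finally, Lemma~\ref{lem:nonlin} gives $\E[\opnorm{T_3}]\le C_3 n^{-\alpha\beta/(2(\beta+1))}$.

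It remains to track constants and to justify the domination claim. Each lemma's constant depends only on $\mu,L,\sigma,M_4,\eta_0,\alpha,d$, and on $\rho$ through $K_n=\lfloor\rho b_n\rfloor$ and $m_0$. None depends on $n$, since $b_n=\Theta(n^{1/(\beta+1)})$ with $n$-free constants.

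For the claim that $T_3$ is dominated, compare exponents. Since $\alpha\beta/2\ge 1/2$ whenever $\beta\ge 1/\alpha$, and $\beta>\alpha/(1-\alpha)>1$ for $\alpha>1/2$, one has $\alpha\beta\cdot\alpha/(1-\alpha)$ large enough. More directly, $\alpha\beta/2>1/2$ iff $\beta>1/\alpha$, and $\alpha/(1-\alpha)\ge 1/\alpha$ iff $\alpha^2+\alpha-1\ge 0$, i.e.\ $\alpha\ge(\sqrt5-1)/2$. So for $\alpha$ near $1/2$, domination over $T_1$ is not automatic for every admissible $\beta$. The domination statement should therefore be checked at the specific optimal $\beta$ of Corollaries~\ref{cor:no-burnin}--\ref{cor:main} rather than claimed generally, and I would phrase that part exactly as the theorem does, deferring it to the corollaries.

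The main obstacle is not technical: the theorem is a bookkeeping corollary of the lemmas. The only delicate points are the $\gamma=1$ logarithm and making sure the domination of $T_3$ is asserted only at the chosen $\beta$.
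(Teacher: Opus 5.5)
Your proposal matches the paper's proof: the triangle inequality applied to the decomposition \eqref{eq:three-way}, then Lemmas~\ref{lem:variance}, \ref{lem:bias}, and~\ref{lem:nonlin} for $T_1$, $T_2$, $T_3$ respectively, with the cases of $B$ read off Lemma~\ref{lem:bias}. Your flag on the $\log n$ factor at $\gamma=1$, $\rho=1$ is a fair catch that the paper's one-line proof does not address; the theorem's display drops this factor, though it is harmless at $\beta_0$ because $T_2$ is then dominated by $T_1$.
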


\begin{proof}
By the triangle inequality and the three-way decomposition
\eqref{eq:three-way},
\[
\E\!\left[\opnorm{\Sighat(\rho) - V}\right]
\;\le\;
\E\!\left[\opnorm{T_1}\right]
+ \opnorm{T_2}
+ \E\!\left[\opnorm{T_3}\right].
\]
The three terms are bounded by Lemmas~\ref{lem:variance},
\ref{lem:bias}, and~\ref{lem:nonlin} respectively.  The two cases for
$T_2$ correspond to the two regimes in Lemma~\ref{lem:bias}.
\end{proof}

The next two corollaries optimize $\beta$ in each regime.  The first
recovers the rate of \citet{zhu2023online} as a special case of our
framework; the second is the main new result.

\begin{corollary}[Variance-limited rate]\label{cor:no-burnin}
With block-growth exponent $\beta_0=(1+\alpha)/(1-\alpha)$ and any
$\rho\in(0,1]$,
\begin{equation}\label{eq:no-burnin-rate}
\E\!\left[\opnorm{\Sighat(\rho) - V}\right]
\;=\; O\!\left(n^{-(1-\alpha)/4}\right).
\end{equation}
The best rate is $n^{-1/8}$, achieved as $\alpha\to 1/2^{+}$.
This matches the rate of \citet{zhu2023online}.
\end{corollary}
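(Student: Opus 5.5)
The plan is to apply Theorem~\ref{thm:general} with $\beta=\beta_0=(1+\alpha)/(1-\alpha)$, compute each of the three exponents explicitly, and check that the variance term $T_1$ is the slowest. First, the theorem's hypothesis must hold. The mixing condition \eqref{eq:mixing-cond} requires $\beta_0>\alpha/(1-\alpha)$. This is immediate, since $(1+\alpha)/(1-\alpha)-\alpha/(1-\alpha)=1/(1-\alpha)>0$. Next I record the two quantities that drive every exponent:
\[
\beta_0+1=\frac{2}{1-\alpha},
\qquad
\gamma=\beta_0(1-\alpha)-\alpha=(1+\alpha)-\alpha=1 .
\]

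\emph{Variance term.} Lemma~\ref{lem:variance} gives $\E[\opnorm{T_1}]=O(n^{-1/(2(\beta_0+1))})=O(n^{-(1-\alpha)/4})$. This is the claimed rate, so the remaining task is to show the other two terms are no larger.

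\emph{Stationarity bias.} Since $\gamma=1$ exactly, we sit on the boundary case of Lemma~\ref{lem:bias}, and the two regimes of $\rho$ are handled separately.
\begin{itemize}
\item If $\rho\in(0,1)$, \eqref{eq:avg-bias-burnin} gives $\opnorm{T_2}=O(n^{-\gamma/(\beta_0+1)})=O(n^{-(1-\alpha)/2})$.
\item If $\rho=1$, case (i) of Lemma~\ref{lem:bias} with $\gamma=1$ gives $O(n^{-1/(\beta_0+1)}\log n)=O(n^{-(1-\alpha)/2}\log n)$.
\end{itemize}
Theorem~\ref{thm:general} lists the $\gamma\ge1$, $\rho=1$ case without the logarithm. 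I will use the lemma's version with the logarithm, which is the safer one. Either way the bound is $o(n^{-(1-\alpha)/4})$, because $n^{-(1-\alpha)/4}\log n\to 0$.

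\emph{Nonlinearity bias.} Lemma~\ref{lem:nonlin} gives the exponent
\[
\frac{\alpha\beta_0}{2(\beta_0+1)}=\frac{\alpha(1+\alpha)}{4}.
\]
This term is dominated by $T_1$ provided $\alpha(1+\alpha)>1-\alpha$, that is, $\alpha^2+2\alpha-1>0$, that is, $\alpha>\sqrt2-1$. This holds because $\alpha>1/2$.

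\emph{Conclusion.} Summing the three bounds yields $\E[\opnorm{\Sighat(\rho)-V}]=O(n^{-(1-\alpha)/4})$ for every $\rho\in(0,1]$. The exponent $(1-\alpha)/4$ is decreasing in $\alpha$, so the rate improves toward $n^{-1/8}$ as $\alpha\to1/2^{+}$; the endpoint $\alpha=1/2$ itself is excluded by \ref{ass:lr}.

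The main obstacle is the boundary value $\gamma=1$, where the regimes of Lemma~\ref{lem:bias} meet and the $\rho=1$ bound picks up a $\log n$. I resolve it simply by observing that a polylogarithmic factor on the faster exponent $(1-\alpha)/2$ is still negligible against $(1-\alpha)/4$. The only other point to check is the elementary inequality $\alpha(1+\alpha)>1-\alpha$ for the nonlinearity term.
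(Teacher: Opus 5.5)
Your plan is the paper's: substitute $\beta_0$, use $\beta_0+1=2/(1-\alpha)$ and $\gamma=1$, and check that the variance term $T_1=O(n^{-(1-\alpha)/4})$ is the only binding contribution. Your bookkeeping is tighter than the paper's in two places:
\begin{itemize}
\item You keep the $\log n$ that Lemma~\ref{lem:bias}(i) produces at exactly $\gamma=1$; the paper's proof drops it.
\item You compare the nonlinearity exponent $\alpha(1+\alpha)/4$ with that of $T_1$ rather than $T_2$. That is the correct comparison: for $\alpha<(\sqrt{17}-3)/2\approx 0.56$ one has $\alpha(1+\alpha)/4<(1-\alpha)/2$, so the $T_3$ bound decays more slowly than the $T_2$ bound there.
\end{itemize}

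The step you are missing is the centering correction, which the paper's proof treats explicitly. The lemmas behind Theorem~\ref{thm:general} are proved for the ideal sums $Y_m^{*}$ centered at $x^{*}$. Lemma~\ref{lem:nonlin} only controls the contribution of $r_t$. None of them accounts for $Y_m$ being centered at $\xbar$, so citing the theorem as a black box skips this. The paper writes $Y_m=\tilde Y_m-\sqrt{a_m}\,(\xbar-x^{*})$, with $\tilde Y_m$ centered at $x^{*}$, and adds two further terms, $T_4$ and $T_5$.

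The quadratic term $K_n^{-1}\sum_m a_m(\xbar-x^{*})(\xbar-x^{*})^{\top}$ has expected operator norm $O(1/K_n)=O(n^{-1/(\beta+1)})$. This follows from $\sum_m a_m\le n$ and $\E\norm{\xbar-x^{*}}^2=O(1/n)$.

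For the cross term, note that $\sum_m\sqrt{a_m}\,\tilde Y_m=\sum_k(x_k-x^{*})$ over the retained range, whose second moment is $O(n)$. Cauchy--Schwarz then again gives $O(1/K_n)$, and this crude bound is all you need. The paper records a faster rate for $T_5$, but for $\rho=1$ the cross term is essentially $-2$ times the quadratic term, so $O(1/K_n)$ is its true order.

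At $\beta_0$ both centering terms are $O(n^{-(1-\alpha)/2})=o(n^{-(1-\alpha)/4})$. Once you add this step, your conclusion stands.
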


\begin{proof}
The estimator $\Sighat(\rho)$ uses block sums centered at $\xbar$
rather than at $x^*$, introducing centering correction terms.
Define $\tilde{Y}_m
= (1/\sqrt{a_m})\sum_{k=t_{m-1}+1}^{t_m}(x_k - x^*)$,
so that $Y_m = \tilde{Y}_m - \sqrt{a_m}\,(\xbar - x^*)$.  Expanding:
\[
Y_m Y_m^\top
= \tilde{Y}_m\tilde{Y}_m^\top
  - \sqrt{a_m}\,\bigl(\tilde{Y}_m(\xbar-x^*)^\top
    + (\xbar-x^*)\tilde{Y}_m^\top\bigr)
  + a_m\,(\xbar-x^*)(\xbar-x^*)^\top.
\]
The quadratic centering term averaged over blocks gives
\begin{equation}\label{eq:centering-bias}
T_4 \;:=\;
\frac{1}{b_n}\sum_{m=1}^{b_n}\frac{a_m}{n}
\;=\; \Theta\!\left(n^{-1/(\beta+1)}\right),
\end{equation}
and the cross term gives
$T_5 = O(n^{-(\beta+2)/(2(\beta+1))})$.

The total error has five contributions:
\begin{equation}\label{eq:five-terms}
\underbrace{O\!\left(n^{-1/(2(\beta+1))}\right)}_{T_1}
+
\underbrace{O\!\left(n^{-\gamma/(\beta+1)}\right)}_{T_2}
+
\underbrace{O(T_3)}_{(\text{nonlin.})}
+
\underbrace{O\!\left(n^{-1/(\beta+1)}\right)}_{T_4}
+
\underbrace{O\!\left(n^{-(\beta+2)/(2(\beta+1))}\right)}_{T_5}.
\end{equation}
At $\beta_0=(1+\alpha)/(1-\alpha)$ we have $\gamma=1$, so:
\begin{itemize}
\item $T_1 = O(n^{-(1-\alpha)/4})$ (since $\beta_0+1 = 2/(1-\alpha)$);
\item $T_2 = O(n^{-(1-\alpha)/2})$ (faster, since $\gamma\ge 1$);
\item $T_4 = O(n^{-(1-\alpha)/2})$ (faster);
\item $T_5 = O(n^{-(3-\alpha)/4})$ (faster).
\end{itemize}
The variance $T_1$ is the unique binding term, giving the rate.
\end{proof}

\begin{remark}\label{rem:beta-star}
The rate $n^{-(1-\alpha)/4}$ is \emph{variance-limited}: at
$\beta_0=(1+\alpha)/(1-\alpha)$, the stationarity bias ($T_2$) and
centering corrections ($T_4,T_5$) all decay faster than $T_1$, and
the rate is determined entirely by the number of blocks
$b_n = \Theta(n^{(1-\alpha)/2})$.
The improvement in Corollary~\ref{cor:main} comes from
pushing $\beta$ below $\beta_0$ to $\beta^\dagger<\beta_0$,
which increases $b_n$ and lowers the variance, at the cost of
entering the regime $\gamma<1$ where the per-block bias
decays more slowly.  Lemma~\ref{lem:bias} shows this is
worthwhile: the averaged bias is still $O(b_n^{-\gamma})$, which
balances the variance at $n^{-(1-\alpha)/3}$.
\end{remark}

\begin{corollary}[Improved rate via optimal block growth]\label{cor:main}
Under Assumptions \ref{ass:sc}--\ref{ass:lr}, the
batch-means estimator $\Sighat(\rho)$ with block-growth exponent
$\beta>\alpha/(1-\alpha)$ and any fixed burn-in fraction
$\rho\in(0,1]$ satisfies
\begin{equation}\label{eq:main-bound}
\E\!\left[\opnorm{\Sighat(\rho) - V}\right]
\;\le\; C\!\left(
n^{-1/(2(\beta+1))} + n^{-\gamma/(\beta+1)}
\right),
\end{equation}
where $\gamma=\beta(1-\alpha)-\alpha$.
Balancing the two terms by setting
$1/(2(\beta+1))=\gamma/(\beta+1)$, i.e., $\gamma=1/2$, gives
$\beta^{\dagger}=(1+2\alpha)/(2(1-\alpha))$ and
\begin{equation}\label{eq:main-rate}
\E\!\left[\opnorm{\Sighat(\rho) - V}\right]
\;=\; O\!\left(n^{-(1-\alpha)/3}\right).
\end{equation}
The best rate is $n^{-1/6}$, achieved as $\alpha\to 1/2^{+}$.
This improves over the $O(n^{-(1-\alpha)/4})$ rate of
\citet{zhu2023online} (Corollary~\ref{cor:no-burnin}).
\end{corollary}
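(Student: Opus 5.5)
The plan is to follow the template of Corollary~\ref{cor:no-burnin}: start from the full five-term error expansion~\eqref{eq:five-terms}, now for general $\beta>\alpha/(1-\alpha)$ rather than $\beta_0$. Theorem~\ref{thm:general} supplies $T_1$, $T_2$, $T_3$, and the centering expansion $Y_m=\tilde Y_m-\sqrt{a_m}(\xbar-x^*)$ supplies $T_4$ and $T_5$. We will work in the regime $\gamma<1$, which contains $\beta^\dagger$, since $\gamma(\beta^\dagger)=1/2$. There Lemma~\ref{lem:bias} gives $\opnorm{T_2}=O(n^{-\gamma/(\beta+1)})$ for every $\rho\in(0,1]$: case~(i) when $\rho=1$ and case~(ii) when $\rho<1$. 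So the first part of the statement reduces to showing that $T_3$, $T_4$ and $T_5$ are dominated by $n^{-1/(2(\beta+1))}+n^{-\gamma/(\beta+1)}$.

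For the domination step we compare exponents directly.
\begin{itemize}
\item $T_4=O(n^{-1/(\beta+1)})$ is dominated by $T_1$ because $1>1/2$.
\item $T_5=O(n^{-(\beta+2)/(2(\beta+1))})$ is dominated by $T_1$ because $\beta+2>1$.
\item $T_3=O(n^{-\alpha\beta/(2(\beta+1))})$ is dominated by $T_1$ as soon as $\alpha\beta\ge1$. For $\beta>\alpha/(1-\alpha)$ this holds whenever $\alpha^2/(1-\alpha)\ge1$. If it fails, I would instead compare $T_3$ with $T_2$, since $\alpha\beta/2\ge\gamma=\beta(1-\alpha)-\alpha$ holds for $\beta$ in a range containing $\beta^\dagger$.
\end{itemize}
To be safe, I will at least verify the inequalities at $\beta=\beta^\dagger$. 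There $\beta^\dagger+1=3/(2(1-\alpha))$ and $\alpha\beta^\dagger=\alpha(1+2\alpha)/(2(1-\alpha))$, so the $T_3$ exponent is $\alpha(1+2\alpha)/6$. This should exceed $(1-\alpha)/3$, i.e.\ $2\alpha^2+3\alpha-2>0$, which is exactly $\alpha>1/2$. That settles~\eqref{eq:main-bound} at the tuned $\beta$, which is all the rate claim needs. For general $\beta$, I would state~\eqref{eq:main-bound} with $T_3$ absorbed under the same $\alpha>1/2$ comparison, relying on the closing claim of Lemma~\ref{lem:nonlin}.

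Next comes the balancing. Setting $\gamma=1/2$ gives $\beta(1-\alpha)=\alpha+1/2$, so $\beta^\dagger=(1+2\alpha)/(2(1-\alpha))$. This exceeds $\alpha/(1-\alpha)$, so the mixing condition~\eqref{eq:mixing-cond} holds. Then $\beta^\dagger+1=3/(2(1-\alpha))$, and both exponents $1/(2(\beta^\dagger+1))$ and $\gamma/(\beta^\dagger+1)$ equal $(1-\alpha)/3$. This gives~\eqref{eq:main-rate}, and letting $\alpha\downarrow1/2$ yields $n^{-1/6}$. I would also note that $\beta^\dagger$ is optimal within the bound: $1/(2(\beta+1))$ decreases in $\beta$ while $\gamma/(\beta+1)$ increases, so the maximum of the two is minimized where they cross.

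The main obstacle is the uniformity of the constants. The per-block bias $O(\tau_m/a_m)$ from Lemma~\ref{lem:bias} was derived as a limit in $m$, and the first few blocks have $O(1)$ bias. For $\rho=1$ and $\gamma=1/2<1$, that early-block contribution is $O(1/b_n)$, which is dominated by $b_n^{-1/2}$. So the only real work is confirming that the limit statement in Step~1b upgrades to a uniform bound $\opnorm{V^{(m)}-V}\le C m^{-\gamma}$ for all $m\ge m_1$, with a fixed $m_1$. I would do this by keeping the explicit $O(\bar\eta_m)$ remainder terms in Step~1b rather than passing to the limit.
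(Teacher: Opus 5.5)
Your proposal follows the paper's proof essentially step for step. Lemma~\ref{lem:bias} gives $T_2=O(n^{-\gamma/(\beta+1)})$ for every $\rho\in(0,1]$ when $\gamma<1$, the terms $T_3,T_4,T_5$ are absorbed by exponent comparison, and balancing at $\gamma=1/2$ gives $\beta^\dagger$ and the rate $n^{-(1-\alpha)/3}$. Your check that the $T_3$ exponent $\alpha(1+2\alpha)/6$ exceeds $(1-\alpha)/3$ exactly when $\alpha>1/2$ is a more careful version of what the paper only asserts through Lemma~\ref{lem:nonlin}.

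You can also drop the hedging for general $\beta$. Your two sufficient conditions, $\alpha\beta\ge 1$ and $\beta(2-3\alpha)\le 2\alpha$, together cover every $\beta>0$ because $2\alpha^2+3\alpha-2=(2\alpha-1)(\alpha+2)>0$. The regime $\gamma\ge 1$ with $\rho=1$, which both you and the paper skip, is also immediate: there the bias $O(n^{-1/(\beta+1)}\log n)$ is already dominated by $T_1$.
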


\begin{proof}
By Lemma~\ref{lem:bias}, the stationarity bias for any
$\rho\in(0,1]$ satisfies $T_2 = O(n^{-\gamma/(\beta+1)})$
in the regime $\gamma<1$ (which holds at the optimal $\beta^\dagger$
since $\gamma=1/2<1$).  The centering corrections are
$T_4=O(n^{-1/(\beta+1)})$ and
$T_5=O(n^{-(\beta+2)/(2(\beta+1))})$; both decay faster than
$T_1=O(n^{-1/(2(\beta+1))})$ for all $\beta>0$, so they are not
binding.  The nonlinearity bias $T_3$ is dominated
(Lemma~\ref{lem:nonlin}).

Balancing $T_1$ and $T_2$:
\[
\frac{1}{2(\beta+1)} = \frac{\gamma}{\beta+1}
= \frac{\beta(1-\alpha)-\alpha}{\beta+1}
\quad\Longleftrightarrow\quad
\gamma = \tfrac{1}{2},
\quad
\beta = \frac{1+2\alpha}{2(1-\alpha)}.
\]
At this $\beta$: $\beta+1 = 3/(2(1-\alpha))$, and both exponents
equal $(1-\alpha)/3$.

\medskip\noindent\emph{Role of burn-in.}
At $\beta=\beta^\dagger$, we have $\gamma=1/2<1$, so
Lemma~\ref{lem:bias} gives the same rate
$T_2 = O(n^{-\gamma/(\beta+1)})$ for both $\rho=1$ and $\rho<1$.
To see why, recall that the averaged bias is
$\|T_2\| \le \frac{1}{b_n}\sum_{m=1}^{b_n}O(m^{-\gamma})$.
The first block contributes $O(1)$ (a large constant), but for
$\gamma<1$ the full sum diverges as $\Theta(b_n^{1-\gamma})$,
so dividing by $b_n$ gives $O(b_n^{-\gamma})$.
The $O(1)$ contribution of the first few blocks is
absorbed into the diverging sum and does not affect the rate.
In contrast, when $\gamma\ge 1$ the sum converges to a constant,
the first-block $O(1)$ dominates, and the average is $O(1/b_n)$;
burn-in eliminates this and improves to $O(b_n^{-\gamma})$.

At the optimal $\beta^\dagger$ (where $\gamma=1/2<1$), burn-in does
not change the asymptotic rate $n^{-(1-\alpha)/3}$.  It does,
however, reduce the constant in front (by removing the
highest-bias early blocks) and is valuable in finite samples, as
confirmed by the experiments in Section~\ref{sec:experiments}.
\end{proof}

\begin{remark}[Comparison with Corollary~\ref{cor:no-burnin}]
\label{rem:structural}
\citet{zhu2023online} use $\beta^*=2/(1-\alpha)$, which is much
larger than $\beta^\dagger=(1+2\alpha)/(2(1-\alpha))$.  The larger
$\beta^*$ yields fewer blocks ($b_n = \Theta(n^{(1-\alpha)/(3-\alpha)})$) and
hence higher variance.  Note that $\beta^*$ differs from the
$\beta_0=(1+\alpha)/(1-\alpha)$ of Corollary~\ref{cor:no-burnin};
the latter is the smallest exponent at which the variance term alone
determines the $n^{-(1-\alpha)/4}$ rate in our framework, whereas
\citet{zhu2023online} arrived at the same rate via a coarser bias
bound that required the larger~$\beta^*$.  The corrected per-block
bias analysis (Lemma~\ref{lem:bias}) reveals that a smaller
$\beta$---which gives more blocks and lower variance, at the cost of
each block containing fewer mixing times---is rate-optimal.
\end{remark}

\begin{remark}[Dimension dependence]\label{rem:dimension}
The rates in Theorem~\ref{thm:general} and its corollaries treat the
dimension $d$ as fixed.  The constants $C_1,C_2,C_3$ depend on $d$
through several channels: the variance term inherits a $\log d$ factor
from the matrix Bernstein inequality (absorbed into $C_1$ above), the
noise sub-Gaussian parameter $\sigma$ and fourth-moment bound $M_4$ may
grow with $d$, and the trace $\tr(V)$ enters the moment bounds for the
block sums.  In the high-dimensional regime $d\to\infty$ with $n$ fixed,
the dominant dependence is $O(\sqrt{(\log d)/K_n})$ in the variance
term, matching the classical $\sqrt{d/n}$ scaling up to logarithmic
factors.  A complete high-dimensional analysis---where $d$ is allowed to
grow with $n$---is beyond the scope of this paper but would be a natural
extension.
\end{remark}

\paragraph{Comparison at $\alpha=1/2$.}
Table~\ref{tab:comparison} illustrates the improvement concretely at the
optimal learning-rate exponent $\alpha\approx 1/2$.

\begin{table}[tbp]
\centering
\caption{Comparison of original and improved estimators at
$\alpha=1/2+\varepsilon$.}
\label{tab:comparison}
\smallskip
\begin{tabular}{lcc}
\toprule
& \textbf{Variance-limited} ($\beta_0$, any~$\rho$)
& \textbf{Bias-balanced} ($\beta^\dagger$, any~$\rho$) \\
\midrule
Block growth $\beta$ & $(1+\alpha)/(1-\alpha)\approx 3$ &
$(1+2\alpha)/(2(1-\alpha))\approx 2$ \\
Number of blocks $b_n$ & $\Theta(n^{1/4})$ &
$\Theta(n^{1/3})$ \\
Rate & $n^{-1/8}$ & $n^{-1/6}$ \\
\bottomrule
\end{tabular}
\end{table}

The improved estimator uses \emph{more, slower-growing blocks} (since
$\beta^\dagger < \beta^*$), which increases the number of blocks $b_n$ and
hence reduces variance.  This is made possible by the tighter
per-block bias analysis of Lemma~\ref{lem:bias}, which shows that
smaller blocks (with fewer mixing times) have controllable bias.

\begin{remark}[Choice of $\rho$ in practice]\label{rem:rho-choice}
The rate $O(n^{-(1-\alpha)/3})$ holds for any fixed $\rho\in(0,1]$
at the optimal $\beta^\dagger$.  While burn-in does not improve the
asymptotic rate, it improves finite-sample performance by removing
the highest-bias early blocks.  In our experiments, $\rho=0.5$
works well across all configurations tested.  A data-driven choice of
$\rho$ can be implemented via leave-one-block-out cross-validation:
for each candidate $\rho$, hold out each retained block $m$ in turn,
form the estimator from the remaining retained blocks, and measure the
prediction error
$\opnorm{Y_m Y_m^{\top} - \widehat{\Sigma}_{-m}(\rho)}^{2}$; the
$\rho$ minimizing the average prediction error balances bias and
variance automatically.
\end{remark}

\subsection{Weighted Averaging Variant}\label{sec:weighted}

Hard truncation (discarding early blocks entirely) can be replaced by a
soft weighting scheme.

\begin{proposition}[Weighted averaging]\label{prop:weighted}
Under Assumptions \ref{ass:sc}--\ref{ass:lr}, consider the
weighted estimator
\begin{equation}\label{eq:weighted}
\widehat{\Sigma}_{n}^{w}
\;=\; \frac{1}{W_n}\sum_{m=1}^{b_n}w_m\,Y_m\,Y_m^{\top},
\qquad
w_m = m^{p},
\quad
W_n = \sum_{m=1}^{b_n}w_m,
\end{equation}
with $p>0$.  For $\beta>\alpha/(1-\alpha)$ and
$\gamma=\beta(1-\alpha)-\alpha<1$, this estimator achieves the
same rate as the unweighted estimator:
\begin{equation}\label{eq:weighted-rate}
\E\!\left[\opnorm{\widehat{\Sigma}_{n}^{w} - V}\right]
\;=\; O\!\left(n^{-(1-\alpha)/3}\right).
\end{equation}
\end{proposition}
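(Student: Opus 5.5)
We mirror the proof of Theorem~\ref{thm:general} and Corollary~\ref{cor:main}, replacing the uniform average over retained blocks by the weights $w_m/W_n$. The first step is the weighted analogue of the three-way decomposition \eqref{eq:three-way}, augmented by the centering terms of Corollary~\ref{cor:no-burnin}:
\[
\widehat{\Sigma}_n^{w}-V=\sum_{m=1}^{b_n}\frac{w_m}{W_n}\bigl(Y_m^*(Y_m^*)^\top-V^{(m)}\bigr)+\sum_{m=1}^{b_n}\frac{w_m}{W_n}\bigl(V^{(m)}-V\bigr)+T_3^w+T_4^w+T_5^w .
\]
Throughout we use the elementary facts $W_n=\sum_{m\le b_n}m^p\asymp b_n^{p+1}/(p+1)$ and $\max_m w_m/W_n\asymp (p+1)/b_n$. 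Each term is then bounded separately.

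\emph{Variance term.} We rerun Lemma~\ref{lem:variance} with deterministic weights $c_m=w_m/W_n$. Steps~1--2 of that proof (mixing and the uniform bound on $\E\opnorm{Z_m}^2$) are unchanged. In the matrix Bernstein step the variance proxy becomes $\sum_m c_m^2\,\E\opnorm{Z_m}^2\lesssim\sum_m c_m^2$, and the mixing correction becomes $\max_m c_m\sum_\ell\phi_\ell$. Here
\[
\sum_m c_m^2=\frac{\sum_m m^{2p}}{W_n^2}\asymp\frac{(p+1)^2}{(2p+1)\,b_n},
\]
which gives $\E\opnorm{T_1^w}=O(b_n^{-1/2})=O(n^{-1/(2(\beta+1))})$, with a constant depending on $p$.

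\emph{Stationarity bias.} We use the per-block bound \eqref{eq:per-block-bias}, $\opnorm{V^{(m)}-V}=O(m^{-\gamma})$, together with an $O(1)$ bound for the first block. This gives
\[
\opnorm{T_2^w}\le \frac{C}{W_n}\Bigl(1+\sum_{m\le b_n}m^{p-\gamma}\Bigr).
\]
Since $p>0$ and $\gamma<1$, we have $p-\gamma>-1$, so the sum is $\asymp b_n^{p+1-\gamma}$. The $O(1)$ first-block contribution is $O(b_n^{-p-1})$ and is negligible. Hence $\opnorm{T_2^w}=O(b_n^{-\gamma})=O(n^{-\gamma/(\beta+1)})$, the same as in Lemma~\ref{lem:bias}. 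Note that the weights automatically down-weight the $O(1)$-biased early blocks, which is the "soft burn-in" effect.

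\emph{Nonlinearity and centering terms.} For the nonlinearity term, the per-block bound $O(t_{m-1}^{-\alpha/2})=O(m^{-\alpha(\beta+1)/2})$ from Lemma~\ref{lem:nonlin} is averaged with weights $m^p/W_n$. This gives $O(b_n^{-\min(\alpha(\beta+1)/2,\,p+1)})$, which is at least as fast as $T_1^w$. For the centering term, $T_4^w=\sum_m (w_m/W_n)\,a_m\,\E\norm{\xbar-x^*}^2\lesssim b_n^{\beta}/n=O(n^{-1/(\beta+1)})$, since the weighted average of $a_m$ is at most $a_{b_n}$. The cross term $T_5^w$ is handled similarly by Cauchy--Schwarz. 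Both decay faster than $T_1^w$.

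Combining the four bounds gives the analogue of \eqref{eq:main-bound}. Choosing $\beta=\beta^\dagger$, so that $\gamma=1/2<1$, makes $T_1^w$ and $T_2^w$ both equal to $n^{-(1-\alpha)/3}$, which proves \eqref{eq:weighted-rate}. The main obstacle is the weighted matrix Bernstein step. The Marton-coupling inequality of \citet{paulin2015concentration} must be applied to non-identically weighted summands, and we must verify that the effective sample size $(\sum_m c_m)^2/\sum_m c_m^2\asymp b_n$ replaces $K_n$. If no weighted version of that inequality is available, we would instead apply the unweighted bound to dyadic groups of blocks $[2^j,2^{j+1})$, on which the weights are constant up to a factor $2^p$, and sum the group contributions.
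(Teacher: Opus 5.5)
Your proposal is correct and follows the same route as the paper. The paper bounds the weighted stationarity bias by $W_n^{-1}\sum_{m\le b_n}m^{p-\gamma}=O(b_n^{-\gamma})$ with $W_n=\Theta(b_n^{p+1})$, controls the variance through the effective sample size $(\sum_m w_m)^2/\sum_m w_m^2=\Theta(b_n)$ (equivalent to your $\sum_m c_m^2\asymp b_n^{-1}$), and then takes $\beta=\beta^\dagger$ so that $\gamma=1/2$. Your version is more complete than the paper's, which leaves the nonlinearity and centering terms implicit; the one small correction is that Cauchy--Schwarz applied block by block gives $T_5^w=O(n^{-1/(2(\beta+1))})$, the same order as $T_1^w$ rather than strictly faster, but this is harmless for \eqref{eq:weighted-rate}.
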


\begin{proof}
The key observation is that the weights $w_m = m^p$ with $p>0$
upweight later blocks and downweight earlier ones, achieving a similar
effect to burn-in.

\emph{Bias analysis.}
The weighted bias is
\[
T_2^{w}
= \frac{1}{W_n}\sum_{m=1}^{b_n}w_m\,(V^{(m)}-V).
\]
Using $\opnorm{V^{(m)}-V}=O(m^{-\gamma})$ from
Lemma~\ref{lem:bias} and $w_m = m^p$,
\[
\opnorm{T_2^{w}}
\le \frac{1}{W_n}\sum_{m=1}^{b_n}m^p\cdot
C_B m^{-\gamma}
= \frac{1}{W_n}\sum_{m=1}^{b_n}m^{p-\gamma}.
\]
The normalization is $W_n = \sum_{m=1}^{b_n}m^p
= \Theta(b_n^{p+1})$.

For $p-\gamma<0$ (which holds for moderate $p$ and $\gamma$ near
$1/2$), the numerator sum is $O(b_n^{1+p-\gamma})$, giving
$\opnorm{T_2^w} = O(b_n^{-\gamma}) = O(n^{-\gamma/(\beta+1)})$.

For $p-\gamma\ge 0$, the sum is dominated by the last blocks,
giving the same rate $O(b_n^{-\gamma})$.

\emph{Variance analysis.}
The effective number of terms in the weighted average is
\[
K_{\mathrm{eff}}
= \frac{(\sum_m w_m)^2}{\sum_m w_m^2}
= \frac{(\sum_m m^p)^2}{\sum_m m^{2p}}
= \Theta\!\left(\frac{b_n^{2(p+1)}}{b_n^{2p+1}}\right)
= \Theta(b_n).
\]
Thus the variance is $O(b_n^{-1/2})=O(n^{-1/(2(\beta+1))})$, the same as
before.

\emph{Balancing.}  Choosing $\beta=\beta^\dagger=(1+2\alpha)/(2(1-\alpha))$
as in Corollary~\ref{cor:main} gives $\gamma=1/2$ and the rate
$O(n^{-(1-\alpha)/3})$.
\end{proof}

\subsection{The Mixing Constraint}\label{sec:mixing}

The mixing condition $\beta/(\beta+1)>\alpha$ from
Lemma~\ref{lem:variance} is central to our analysis.  We now discuss
its origin and implications.

The condition can be rewritten as
\begin{equation}\label{eq:mixing-equiv}
\beta \;>\; \frac{\alpha}{1-\alpha}.
\end{equation}
For $\alpha\in(1/2,1)$, we have $\alpha/(1-\alpha)\in(1,\infty)$, so
$\beta$ must be strictly greater than~$1$ (and diverges as $\alpha\to 1^-$).

The physical meaning is as follows.  The block size $a_m\sim m^{\beta}$
and the mixing time at the start of block $m$ is
$\tau_{\mathrm{mix}}(t_{m-1})\sim m^{(\beta+1)\alpha}$.  The ratio
\begin{equation}\label{eq:ratio}
\frac{a_m}{\tau_{\mathrm{mix}}(t_{m-1})}
\;\sim\; m^{\beta - (\beta+1)\alpha}
\;=\; m^{\beta(1-\alpha)-\alpha}.
\end{equation}
For this ratio to diverge (ensuring that each block contains many mixing
times, hence its statistics are close to stationary), we need
$\beta(1-\alpha)-\alpha>0$, i.e., $\beta>\alpha/(1-\alpha)$.

\begin{remark}
In the original analysis of \citet{zhu2023online}, the mixing condition
is automatically satisfied for $\beta^*=2/(1-\alpha)$ since
$2/(1-\alpha)>\alpha/(1-\alpha)$ for all $\alpha<2$, which is certainly
true for $\alpha<1$.  In our analysis, we want $\beta$ as small as
possible (to maximize $b_n$ and hence reduce variance), so the mixing
condition becomes a binding constraint.
\end{remark}
\section{Minimax Rate}\label{sec:lower}

The batch-means estimator of Corollary~\ref{cor:main} achieves rate
$O(n^{-(1-\alpha)/3})$.  In this section we establish the minimax rate
for Hessian-free covariance estimation from the SGD trajectory by
proving a matching pair of bounds.
First, a Le~Cam lower bound
(Theorem~\ref{thm:lower}) shows that no trajectory-based estimator
can converge faster than $\Omega(n^{-(1-\alpha)/2})$.
Second, a \emph{trajectory-regression} estimator
(Theorem~\ref{thm:regression}) achieves exactly this rate, yielding
the minimax rate $\Theta(n^{-(1-\alpha)/2})$.
The batch-means estimator is therefore suboptimal in rate (though it
has practical advantages discussed in
Remark~\ref{rem:bm-vs-regression}).

\subsection{The Two-Hypothesis Construction}\label{sec:lower-construct}

The lower bound follows from Le~Cam's two-point method applied to a
pair of quadratic objectives that share the same noise distribution but
differ in their Hessians.

Fix dimension $d\ge 2$ and a symmetric matrix $A\in\R^{d\times d}$
with $\opnorm{A}=1$ (for instance, $A=e_1 e_2^{\top}
+ e_2 e_1^{\top}$ for $d\ge 2$).  Let $\delta>0$ be a
parameter to be chosen.  Define two quadratic objectives
\begin{equation}\label{eq:hyp}
  F_0(x) = \tfrac{1}{2}\norm{x}^2, \qquad
  F_1(x) = \tfrac{1}{2}x^{\top}(I+\delta A)\,x,
\end{equation}
with stochastic gradients $\nabla f(x,\zeta) = H_j\,x + \zeta$, where
$H_0=I$, $H_1=I+\delta A$, and $\zeta\sim\mathcal{N}(0,I_d)$ under
both hypotheses.  For $\delta<1$, both $H_0$ and $H_1$ are positive
definite and satisfy Assumptions \ref{ass:sc}--\ref{ass:lr} with
$\mu=1-\delta$, $L=0$ (constant Hessian), and $\sigma^2=1$.

\begin{remark}[Same noise, different Hessian]
The construction deliberately keeps the noise covariance $S=I$ fixed
across both hypotheses.  This is essential: if $S$ were allowed to vary,
the conditional variance of the increments $x_{t+1}-\E[x_{t+1}\mid
x_t]=\eta_t\zeta_t$ would directly reveal $S$ at rate $n^{-1/2}$,
preventing any lower bound slower than the i.i.d.\ rate.  By fixing $S$,
the only source of separation in $V$ comes from the Hessian $H$, which
enters the trajectory only through the drift.
\end{remark}

\subsection{Separation of the Covariances}\label{sec:lower-sep}

Under $F_j$ with $S=I$, the asymptotic covariance is
$V_j = H_j^{-1}\,I\,H_j^{-1} = H_j^{-2}$.  Thus
\begin{equation}\label{eq:V-sep}
  V_0 = I, \qquad
  V_1 = (I+\delta A)^{-2}
      = I - 2\delta\,A + 3\delta^2 A^2 - \cdots\,.
\end{equation}
Since $\opnorm{A}=1$, a Neumann series bound gives, for
$\delta\le 1/2$,
\begin{equation}\label{eq:sep-bound}
  \opnorm{V_0 - V_1}
  = \opnorm{2\delta\,A - 3\delta^2 A^2 + \cdots}
  \ge 2\delta - 3\delta^2\,\opnorm{A}^2\,(1-\delta)^{-2}
  \ge 2\delta - 12\delta^2
  \ge \delta,
\end{equation}
where the last inequality holds for $\delta\le 1/12$.  Hence the two
covariances are separated by at least $\delta$ in operator norm.

\subsection{KL Divergence Between Trajectory Laws}\label{sec:lower-kl}

Under hypothesis~$j$, the SGD trajectory is a non-homogeneous
Markov chain with Gaussian transitions:
\begin{equation}\label{eq:transitions}
  x_{t+1}\mid x_t \;\sim\;
  \mathcal{N}\!\bigl((I-\eta_t H_j)\,x_t,\;\eta_t^2\,I_d\bigr),
  \qquad t=0,1,\ldots,n-1,
\end{equation}
where $\eta_t = \eta_0\,t^{-\alpha}$ (with $\eta_0 t^{-\alpha}$
replaced by $\eta_0$ for $t=0$).
Starting both chains at the same deterministic initial point
$x_0\in\R^d$, the chain rule for KL divergence
\citep[Theorem~2.5.3]{cover2006} gives
\begin{equation}\label{eq:kl-chain}
  D_{\mathrm{KL}}\!\left(P_0^{(n)}\,\big\|\,P_1^{(n)}\right)
  = \sum_{t=0}^{n-1}
    \E_{P_0}\!\left[
      D_{\mathrm{KL}}\!\left(
        P_0(x_{t+1}\mid x_t)\,\big\|\,P_1(x_{t+1}\mid x_t)
      \right)
    \right].
\end{equation}
Since the conditional covariances are identical ($\eta_t^2 I_d$ under
both hypotheses), the per-step KL reduces to the squared Mahalanobis
distance between the conditional means:
\begin{align}
  D_{\mathrm{KL}}\!\left(P_0(x_{t+1}\mid x_t)\,\big\|\,
    P_1(x_{t+1}\mid x_t)\right)
  &= \frac{1}{2}\,
     \bigl\|(I-\eta_t H_0)\,x_t - (I-\eta_t H_1)\,x_t
     \bigr\|^2\,/\,\eta_t^2 \notag\\
  &= \frac{1}{2}\,
     \bigl\|\eta_t(H_1-H_0)\,x_t\bigr\|^2\,/\,\eta_t^2 \notag\\
  &= \frac{\delta^2}{2}\,\norm{A\,x_t}^2.
  \label{eq:kl-step}
\end{align}
Taking the expectation under $P_0$ and summing:
\begin{equation}\label{eq:kl-total}
  D_{\mathrm{KL}}\!\left(P_0^{(n)}\,\big\|\,P_1^{(n)}\right)
  = \frac{\delta^2}{2}\sum_{t=0}^{n-1}
    \E_{P_0}\!\left[\norm{A\,x_t}^2\right]
  \le \frac{\delta^2}{2}\,\opnorm{A}^2
      \sum_{t=0}^{n-1}\E_{P_0}\!\left[\norm{x_t}^2\right].
\end{equation}

\begin{lemma}[Second-moment bound for the quadratic model]
\label{lem:second-moment}
Under $F_0(x)=\frac{1}{2}\norm{x}^2$ with $\eta_t=\eta_0\,t^{-\alpha}$,
$\alpha\in(1/2,1)$, and $\zeta_t\sim\mathcal{N}(0,I_d)$, the SGD
iterates satisfy
\begin{equation}\label{eq:second-moment}
  \E\!\left[\norm{x_t}^2\right]
  \;\le\; C_{\alpha}\,d\,\eta_0\,t^{-\alpha}
  \qquad\text{for all } t\ge 1,
\end{equation}
where $C_\alpha>0$ depends only on $\alpha$.  Consequently,
\begin{equation}\label{eq:moment-sum}
  \sum_{t=1}^{n}\E\!\left[\norm{x_t}^2\right]
  \;\le\; \frac{C_\alpha\,d\,\eta_0}{1-\alpha}\,n^{1-\alpha}.
\end{equation}
\end{lemma}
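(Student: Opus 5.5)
Under $F_0$ the recursion is exactly linear and decouples across coordinates: $x_{t+1}=(1-\eta_t)x_t-\eta_t\zeta_t$. The plan is therefore to work with the scalar second moment $u_t:=\E[\norm{x_t}^2]$. Since $\zeta_t$ is independent of $x_t$, has mean zero, and satisfies $\E\norm{\zeta_t}^2=d$, we get the exact deterministic recursion
\[
u_{t+1}=(1-\eta_t)^2u_t+\eta_t^2 d .
\]
The claim then reduces to a standard Chung-type lemma for sequences of this form. We will also absorb the initial condition into the constant. Strictly, the stated bound $C_\alpha d\eta_0 t^{-\alpha}$ can only hold if $\norm{x_0}^2$ is bounded by a multiple of $d\eta_0$, or if $C_\alpha$ is allowed to depend on $x_0$ as well. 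We will note this dependence, taking for instance $x_0=0$, which is harmless for the KL argument since both chains share $x_0$.

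The core step is induction on the ansatz $u_t\le K t^{-\alpha}$ with $K=C_\alpha d\eta_0$. First handle the early iterations. While $\eta_t>1$ (that is, $t<\eta_0^{1/\alpha}$), the factor $(1-\eta_t)^2$ may exceed $1$. Here we simply bound $u_t$ crudely by iterating the recursion over this finite range, which gives a constant depending on $\eta_0,\alpha$. Once $\eta_t\le 1$, we have $(1-\eta_t)^2\le 1-\eta_t$, so
\[
u_{t+1}\le (1-\eta_t)Kt^{-\alpha}+\eta_0^2 d\,t^{-2\alpha}.
\]
We need the right side to be at most $K(t+1)^{-\alpha}$. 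Using $(t+1)^{-\alpha}\ge t^{-\alpha}(1-\alpha/t)$, this requires
\[
\eta_0 K t^{-2\alpha}-\alpha K t^{-\alpha-1}\ge \eta_0^2 d\,t^{-2\alpha}.
\]
Because $\alpha<1$, the term $t^{-\alpha-1}=o(t^{-2\alpha})$. So for $t\ge t_0(\alpha,\eta_0)$ we have $\alpha t^{-\alpha-1}\le \tfrac{\eta_0}{2}t^{-2\alpha}$, and it suffices to take $K\ge 2\eta_0 d$.

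This leaves the main obstacle: making the constant have the exact form $C_\alpha d\eta_0$ with $C_\alpha$ depending only on $\alpha$. The threshold $t_0$, and the crude bound over the early iterations, both depend on $\eta_0$. We will try to handle this by taking $K$ as the maximum of $2\eta_0 d$ and $\max_{t\le t_0}u_t t^{\alpha}$, and then checking that this maximum is $O(d\eta_0)$. If it is not, we will allow $C_\alpha$ to depend on $\eta_0$ as well. This is harmless downstream, because Theorem~\ref{thm:lower} treats $\eta_0$ as fixed.

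Finally, \eqref{eq:moment-sum} follows by summing the pointwise bound and comparing with an integral:
\[
\sum_{t=1}^n t^{-\alpha}\le 1+\int_1^n s^{-\alpha}\,ds\le \frac{n^{1-\alpha}}{1-\alpha},
\]
where the last inequality uses $1\le \alpha/(1-\alpha)$ for $\alpha>1/2$. Multiplying by $C_\alpha d\eta_0$ gives the stated bound.
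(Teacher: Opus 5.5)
Your proposal is correct and follows essentially the same route as the paper. Both arguments use the exact recursion $u_{t+1}=(1-\eta_t)^2u_t+d\eta_t^2$, the bound $(1-\eta_t)^2\le 1-\eta_t$ once $\eta_t\le 1$, a Chung-type comparison (you prove it by explicit induction where the paper cites \citet[Proposition~2]{moulines2011}), a crude bound on the finitely many early iterates, and summation against an integral. Your caveat about the constant is also well founded: with $x_0=0$ one already has $\E\norm{x_1}^2=\eta_0^2 d$, so $C_\alpha$ must depend on $\eta_0$ (and on $x_0$ in general). The paper's own early-phase bound $(1+\eta_0)^{2t}v_0+C'd$ silently incurs the same dependence, and it is harmless for Theorem~\ref{thm:lower}.
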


\begin{proof}
Under $H_0=I$, the recursion is $x_{t+1}=(1-\eta_t)x_t - \eta_t
\zeta_t$.  Taking second moments (the cross term vanishes by
independence):
\[
  \E\!\left[\norm{x_{t+1}}^2\right]
  = (1-\eta_t)^2\,\E\!\left[\norm{x_t}^2\right]
    + \eta_t^2\,d.
\]
Define $v_t = \E[\norm{x_t}^2]$.  Then
$v_{t+1} = (1-\eta_t)^2\,v_t + d\,\eta_t^2
\le (1-2\eta_t+\eta_t^2)\,v_t + d\,\eta_t^2$.
For $t \ge t_0 := \lceil\eta_0^{1/\alpha}\rceil$ we have
$\eta_t = \eta_0 t^{-\alpha}\le 1$, so $(1-\eta_t)^2 \le 1-\eta_t$
and the recursion simplifies to
\[
  v_{t+1} \le (1-\eta_t)\,v_t + d\,\eta_t^2.
\]
By a standard comparison argument for non-autonomous linear recursions
\citep[Proposition~2]{moulines2011}, the equilibrium satisfies
$v_t \le C\,d\,\eta_0\,t^{-\alpha}$
for $t\ge t_0$ and a constant $C$ depending on $\alpha$.
For $t < t_0$, we use the crude bound
$(1-\eta_t)^2 \le (1+\eta_0)^2$ to obtain
$v_t \le (1+\eta_0)^{2t}\,v_0 + C'\,d$, so each early iterate
contributes $O(1)$ to the sum.  Since $t_0 = O(1)$ (independent of $n$),
the transient phase contributes $O(1)$ in total.

For the sum, $\sum_{t=1}^{n} v_t = O(1) + \sum_{t=t_0}^{n}
C\,d\,\eta_0\,t^{-\alpha}
\le O(1) + C\,d\,\eta_0\,n^{1-\alpha}/(1-\alpha)$.
\end{proof}

Substituting~\eqref{eq:moment-sum} into~\eqref{eq:kl-total} and using
$\opnorm{A}=1$:
\begin{equation}\label{eq:kl-final}
  D_{\mathrm{KL}}\!\left(P_0^{(n)}\,\big\|\,P_1^{(n)}\right)
  \;\le\; \frac{C_\alpha\,d\,\eta_0}{2(1-\alpha)}\;\delta^2\,n^{1-\alpha}
  \;=:\; \kappa\,\delta^2\,n^{1-\alpha},
\end{equation}
where $\kappa = C_\alpha\,d\,\eta_0\,/\,(2(1-\alpha))$ is an explicit
constant.

\subsection{The Lower Bound}\label{sec:lower-thm}

\begin{theorem}[Minimax lower bound]\label{thm:lower}
Let $\alpha\in(1/2,1)$, $d\ge 2$, and consider the class
$\mathcal{P}_d$ of strongly convex quadratic objectives
$F(x)=\frac{1}{2}x^{\top}Hx$ with $H\succ 0$, isotropic noise
$S=I_d$, and step sizes $\eta_t=\eta_0\,t^{-\alpha}$.  For any
estimator $\widehat{V}_n$ that is a measurable function of the SGD
trajectory $(x_0,x_1,\ldots,x_n)$,
\begin{equation}\label{eq:lower-bound}
  \inf_{\widehat{V}_n}\;\sup_{F\in\mathcal{P}_d}\;
  \E\!\left[\opnorm{\widehat{V}_n - V}\right]
  \;\ge\; c\,n^{-(1-\alpha)/2},
\end{equation}
where $c>0$ depends only on $d$, $\eta_0$, and $\alpha$.  In
particular, at $\alpha\to 1/2^+$, the lower bound is
$\Omega(n^{-1/4})$.
\end{theorem}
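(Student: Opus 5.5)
The argument is Le~Cam's two-point method applied to the pair $F_0,F_1$ of \eqref{eq:hyp}, combining the separation \eqref{eq:sep-bound} with the KL bound \eqref{eq:kl-final}. Both objectives lie in $\mathcal{P}_d$ for $\delta<1$: they are quadratic with $H_j\succ 0$, share $S=I_d$, and use the same step sizes. We start both chains at the same deterministic $x_0$, so the trajectory laws $P_0^{(n)},P_1^{(n)}$ are the Gaussian Markov chains \eqref{eq:transitions}. Lower-bounding the supremum over $\mathcal{P}_d$ by the average over these two instances gives
\[
\sup_{F\in\mathcal{P}_d}\E\bigl[\opnorm{\widehat V_n-V}\bigr]\;\ge\;\tfrac12\Bigl(\E_{P_0}\opnorm{\widehat V_n-V_0}+\E_{P_1}\opnorm{\widehat V_n-V_1}\Bigr).
\]

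Next we apply the standard reduction. By the triangle inequality, $\opnorm{\widehat V_n-V_0}+\opnorm{\widehat V_n-V_1}\ge \opnorm{V_0-V_1}\ge\delta$, using \eqref{eq:sep-bound} with $\delta\le 1/12$. Hence on every trajectory at least one of the two losses is at least $\delta/2$. Define the test $\psi=\mathbbm{1}\{\opnorm{\widehat V_n-V_1}<\opnorm{\widehat V_n-V_0}\}$. Markov's inequality then bounds the averaged risk below by $\frac{\delta}{4}\bigl(P_0(\psi=1)+P_1(\psi=0)\bigr)$. Equivalently, Le~Cam's lemma gives a lower bound of $\frac{\delta}{4}\bigl(1-\mathrm{TV}(P_0^{(n)},P_1^{(n)})\bigr)$. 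Pinsker's inequality bounds $\mathrm{TV}$ by $\sqrt{D_{\mathrm{KL}}(P_0^{(n)}\|P_1^{(n)})/2}$. Alternatively, the Bretagnolle--Huber bound $\frac{\delta}{8}\exp(-D_{\mathrm{KL}})$ avoids any small-KL requirement.

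Inserting \eqref{eq:kl-final}, $D_{\mathrm{KL}}\le\kappa\delta^2 n^{1-\alpha}$, we choose $\delta=\min\{1/12,\;(\kappa n^{1-\alpha})^{-1/2}\}$. This makes the KL at most $1$, so the risk is at least $\frac{\delta}{8e}$. For $n$ large, this equals $c\,n^{-(1-\alpha)/2}$ with $c=(8e)^{-1}\kappa^{-1/2}$, which depends only on $d,\eta_0,\alpha$ through $\kappa=C_\alpha d\eta_0/(2(1-\alpha))$. For the finitely many small $n$ where the $1/12$ cap binds, the risk is at least $1/(96e)$. Shrinking $c$ to $\min\{(8e)^{-1}\kappa^{-1/2},\,1/(96e)\}$ then covers all $n\ge1$, because $n^{-(1-\alpha)/2}\le 1$. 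Setting $\alpha\to1/2^+$ yields $\Omega(n^{-1/4})$.

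The main obstacle is verifying the KL bound \eqref{eq:kl-final} carefully, which rests on Lemma~\ref{lem:second-moment}. Two details need care. First, the $t=0$ term $\frac{\delta^2}{2}\norm{Ax_0}^2$ and the transient phase $t<t_0$ contribute $O(\delta^2)$. These terms are absorbed into $\kappa\delta^2 n^{1-\alpha}$ after enlarging $\kappa$ by a constant depending on $x_0$ and $\eta_0$. We fix $x_0=0$ to keep this dependence only on $d,\eta_0,\alpha$. Second, the expectation in \eqref{eq:kl-chain} is taken under $P_0$ only, so only the $H_0=I$ moment bound is needed. If one insists on Pinsker's inequality, one must also check that the ceiling $\delta\le 1/12$ is compatible with a KL of order one, and the choice of $\delta$ above handles this.
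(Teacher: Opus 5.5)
Your proposal is correct and follows essentially the same route as the paper. Both use Le~Cam's two-point method on the pair $F_0,F_1$ of \eqref{eq:hyp}, the separation \eqref{eq:sep-bound}, the KL bound \eqref{eq:kl-final} obtained from the chain rule and Lemma~\ref{lem:second-moment}, and the choice $\delta\asymp n^{-(1-\alpha)/2}$. Your extra care tidies two details the paper's proof passes over: capping $\delta$ at $1/12$ (with Bretagnolle--Huber or Pinsker) makes the bound hold for every $n\ge 1$ rather than only for large $n$, and fixing $x_0=0$ absorbs the $t=0$ and transient terms of the KL sum into $\kappa$ with dependence only on $d,\eta_0,\alpha$.
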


\begin{proof}
We apply Le~Cam's two-point method.  Let $F_0$ and $F_1$ be the
quadratic objectives defined in~\eqref{eq:hyp} with $\delta>0$ to be
chosen.

\medskip\noindent\textbf{Step 1: Separation.}\quad
By~\eqref{eq:sep-bound}, $\opnorm{V_0 - V_1}\ge\delta$ for
$\delta\le 1/12$.

\medskip\noindent\textbf{Step 2: Testing lower bound.}\quad
By Le~Cam's two-point method \citep[Lemma~1, p.~424]{yu1997assouad}, for any
estimator $\widehat{V}_n$,
\begin{equation}\label{eq:lecam}
  \max_{j\in\{0,1\}}\;
  \E_j\!\left[\opnorm{\widehat{V}_n - V_j}\right]
  \;\ge\;
  \frac{\opnorm{V_0-V_1}}{2}\,
  \Bigl(1 - \mathrm{TV}\!\left(P_0^{(n)},\,P_1^{(n)}\right)\Bigr),
\end{equation}
where $\mathrm{TV}$ denotes total variation distance.

\medskip\noindent\textbf{Step 3: Bounding the total variation.}\quad
By Pinsker's inequality,
\[
  \mathrm{TV}\!\left(P_0^{(n)},\,P_1^{(n)}\right)
  \;\le\;
  \sqrt{\tfrac{1}{2}\,D_{\mathrm{KL}}\!\left(
    P_0^{(n)}\,\big\|\,P_1^{(n)}\right)}
  \;\le\;
  \sqrt{\tfrac{\kappa}{2}\,\delta^2\,n^{1-\alpha}},
\]
using~\eqref{eq:kl-final}.

\medskip\noindent\textbf{Step 4: Choosing $\delta$.}\quad
Set $\delta = \delta_n := (8\kappa)^{-1/2}\,n^{-(1-\alpha)/2}$.  For
$n$ large enough, $\delta_n\le 1/12$, so the separation
bound~\eqref{eq:sep-bound} holds.  Then
\[
  \tfrac{\kappa}{2}\,\delta_n^2\,n^{1-\alpha}
  = \tfrac{\kappa}{2}\cdot\frac{1}{8\kappa}\cdot 1
  = \tfrac{1}{16},
\]
so $\mathrm{TV}(P_0^{(n)},P_1^{(n)}) \le \sqrt{1/16} = 1/4$.
Substituting into~\eqref{eq:lecam}:
\[
  \sup_{F\in\mathcal{P}_d}\;
  \E\!\left[\opnorm{\widehat{V}_n - V}\right]
  \;\ge\;
  \frac{\delta_n}{2}\cdot\frac{3}{4}
  = \frac{3}{8}\,\delta_n
  = \frac{3}{8\sqrt{8\kappa}}\,n^{-(1-\alpha)/2}.
\]
Setting $c = 3/(8\sqrt{8\kappa})$ completes the proof.
\end{proof}

\subsection{Matching Upper Bound via Trajectory Regression}
\label{sec:regression}

We now show that the lower bound of Theorem~\ref{thm:lower} is tight
by constructing an estimator that achieves rate $O(n^{-(1-\alpha)/2})$.
The idea is to estimate the Hessian~$H$ by regressing the SGD
increments on the iterates, then form the plug-in
$\widehat{V}=\widehat{H}^{-1}\widehat{S}\,\widehat{H}^{-1}$.
Crucially, this estimator uses only the trajectory
$(x_0,x_1,\ldots,x_n)$ and the known step sizes---it does not require
evaluating $\nabla^{2}F$.

\paragraph{Estimator construction.}
Define the \emph{trajectory-regression increments}
\begin{equation}\label{eq:increment}
  y_t \;:=\; \frac{x_t - x_{t+1}}{\eta_t}
  \;=\; \nabla f(x_t,\zeta_t),
  \qquad t=0,1,\ldots,n-1.
\end{equation}
From the linearization~\eqref{eq:grad-decomp},
$y_t = H\,x_t + (\xi_t + r_t - H\,x^{*})$, so the regression of $y_t$
on~$x_t$ identifies~$H$.
Define the centered quantities $\tilde{x}_t = x_t - \bar{x}_n$ and
$\tilde{y}_t = y_t - \bar{y}_n$ with
$\bar{x}_n = n^{-1}\sum_t x_t$, $\bar{y}_n = n^{-1}\sum_t y_t$.
The \emph{trajectory-regression estimator} of $H$ is
\begin{equation}\label{eq:H-hat}
  \widehat{H}
  \;=\; \Biggl(\,\sum_{t=0}^{n-1}\tilde{y}_t\,\tilde{x}_t^{\top}\Biggr)
        \Biggl(\,\sum_{t=0}^{n-1}\tilde{x}_t\,\tilde{x}_t^{\top}
        \Biggr)^{\!-1}.
\end{equation}
The noise covariance is estimated from the residuals
$\widehat{\zeta}_t = y_t - \widehat{H}\,x_t$:
\begin{equation}\label{eq:S-hat}
  \widehat{S}
  \;=\; \frac{1}{n}\sum_{t=0}^{n-1}
        \widehat{\zeta}_t\,\widehat{\zeta}_t^{\top}.
\end{equation}
Finally, set $\widehat{V}_n^{\mathrm{reg}}
= \widehat{H}^{-1}\widehat{S}\,\widehat{H}^{-1}$.
The estimator is computable online in $O(d^{2})$ memory by
maintaining running sums of
$\tilde{y}_t\tilde{x}_t^{\top}$,
$\tilde{x}_t\tilde{x}_t^{\top}$, and
$\widehat{\zeta}_t\widehat{\zeta}_t^{\top}$, with a single
matrix inversion at the end.

\begin{theorem}[Matching upper bound]\label{thm:regression}
Under the quadratic model class $\mathcal{P}_d$ of
Theorem~\ref{thm:lower}, the trajectory-regression estimator
satisfies
\begin{equation}\label{eq:regression-rate}
  \E\!\left[\opnorm{\widehat{V}_n^{\mathrm{reg}} - V}\right]
  \;=\; O\!\left(n^{-(1-\alpha)/2}\right).
\end{equation}
Combined with Theorem~\ref{thm:lower}, this establishes the minimax
rate
\[
  \inf_{\widehat{V}_n}\;\sup_{F\in\mathcal{P}_d}\;
  \E\!\left[\opnorm{\widehat{V}_n - V}\right]
  \;=\; \Theta\!\left(n^{-(1-\alpha)/2}\right).
\]
\end{theorem}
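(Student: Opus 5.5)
We will prove the upper bound by a perturbation argument, handling $\widehat{H}$ and $\widehat{S}$ separately and then propagating to $\widehat V$.

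Throughout, work in the quadratic class $\mathcal{P}_d$. Here $\nabla f(x,\zeta)=H(x-x^*)+\zeta$ with $\zeta\sim\mathcal N(0,I_d)$. Hence $y_t=Hx_t-Hx^*+\zeta_t$ exactly: $r_t\equiv0$ since $L=0$. Centering removes the intercept, so
\[
\widehat H-H=\Bigl(\sum_t \tilde\zeta_t\tilde x_t^\top\Bigr)\Bigl(\sum_t\tilde x_t\tilde x_t^\top\Bigr)^{-1},
\qquad \tilde\zeta_t=\zeta_t-\bar\zeta_n .
\]
Since $x_t$ is $\mathcal F_t$-measurable and $\zeta_t$ is independent of $\mathcal F_t$, the uncentered numerator $N_n=\sum_t\zeta_t(x_t-x^*)^\top$ is a matrix martingale. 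Its quadratic variation is $Q_n=\sum_t (x_t-x^*)(x_t-x^*)^\top\otimes I$. The centering correction equals $-n\bar\zeta_n(\bar x_n-x^*)^\top$. This is $O_P(\sqrt n\cdot n^{-1/2})=O_P(1)$, because $\bar x_n-x^*=O_P(n^{-1/2})$ by the Polyak--Ruppert rate.

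\textbf{Step 1: the design matrix.} Show that $D_n:=\sum_t\tilde x_t\tilde x_t^\top$ satisfies $D_n\succeq c\,n^{1-\alpha}I$ with high probability, and that $\E[\opnorm{D_n^{-1}}^{p}]^{1/p}=O(n^{-(1-\alpha)})$ for some $p>1$. For the mean, the computation of Lemma~\ref{lem:second-moment}, done with lower bounds via $v_{t+1}=(I-\eta_tH)\Gamma_t(I-\eta_tH)+\eta_t^2I$, gives $\Gamma_t=\E[(x_t-x^*)(x_t-x^*)^\top]\asymp \eta_t\Gamma_*$ with $\Gamma_*\succeq (2\opnorm{H})^{-1}I$. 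Summing gives $\E D_n\asymp n^{1-\alpha}\Gamma_*$; subtracting $n(\bar x_n-x^*)(\bar x_n-x^*)^\top=O_P(1)$ is harmless.

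For concentration, decompose $x_t-x^*$ into its exponentially forgotten Gaussian contributions. Then use that $D_n$ is a Gaussian quadratic form whose fluctuations are of order $\bigl(\sum_t \tau_t\eta_t^2\bigr)^{1/2}\sim (\sum_t t^{-\alpha})^{1/2}=n^{(1-\alpha)/2}\ll n^{1-\alpha}$. Hanson--Wright applied to the whole Gaussian trajectory gives an exponential lower tail. A crude small-ball bound for $\lambda_{\min}$ of a nondegenerate Gaussian Gram matrix then controls the moments of the inverse on the bad event. \emph{I expect this step to be the main obstacle}: the iterates are non-stationary and strongly correlated over windows of length $t^{\alpha}$, so the eigenvalue lower bound and the inverse-moment control require care. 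The Gaussianity of the model is what I would lean on.

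\textbf{Step 2: the Hessian error.} By the martingale structure, $\E\opnorm{N_n}^2\lesssim d\,\E\tr Q_n\lesssim n^{1-\alpha}$, so $\opnorm{N_n}=O(n^{(1-\alpha)/2})$ in $L^2$. Combining with Step 1 via Cauchy--Schwarz/H\"older gives
\[
\E\opnorm{\widehat H-H}=O\bigl(n^{(1-\alpha)/2}\cdot n^{-(1-\alpha)}\bigr)=O\bigl(n^{-(1-\alpha)/2}\bigr).
\]
A sharper route self-normalizes: $\opnorm{N_nD_n^{-1/2}}=O_P(1)$ by a self-normalized martingale bound (de la Pe\~na / Abbasi-Yadkori type), giving the same rate. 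On the high-probability event $\opnorm{\widehat H-H}\le\mu/2$, we also get $\opnorm{\widehat H^{-1}-H^{-1}}\lesssim\opnorm{\widehat H-H}$. On the complement, truncate $\widehat H$ (e.g.\ project its spectrum onto $[\mu/2,\infty)$, a harmless modification). The complement has probability $O(n^{-c})$ for any $c$, so it contributes negligibly.

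\textbf{Step 3: the noise covariance and assembly.} Write the residual as $\widehat\zeta_t=\zeta_t-(\widehat H-H)(x_t-x^*)+(\text{intercept term})$. Then
\[
\widehat S-S=\Bigl(\tfrac1n\sum\zeta_t\zeta_t^\top-S\Bigr)+O\Bigl(\opnorm{\widehat H-H}\,\opnorm{N_n}/n+\opnorm{\widehat H-H}^2\opnorm{D_n}/n\Bigr).
\]
The first term is $O(n^{-1/2})$ by the i.i.d.\ covariance bound. The rest are $O(n^{-1}\cdot n^{0})$ or smaller. Hence $\E\opnorm{\widehat S-S}=O(n^{-1/2})=o(n^{-(1-\alpha)/2})$. (If the intercept is not estimated, replace $x_t$ by $\tilde x_t$ in the residuals, which the centered regression justifies.) Finally, expand
\[
\widehat V-V=(\widehat H^{-1}-H^{-1})\widehat S\widehat H^{-1}+H^{-1}(\widehat S-S)\widehat H^{-1}+H^{-1}S(\widehat H^{-1}-H^{-1}).
\]
Bound each term using Steps 2--3 and the uniform bounds $\opnorm{\widehat H^{-1}}\le2/\mu$ and $\E\opnorm{\widehat S}^2<\infty$. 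This yields \eqref{eq:regression-rate}, and combining with Theorem~\ref{thm:lower} gives the $\Theta$ statement.
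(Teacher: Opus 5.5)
Your argument follows the paper's route. You write $\widehat H-H$ as the martingale numerator $\sum_t\tilde\zeta_t\tilde x_t^{\top}$ against a design matrix whose $\lambda_{\min}$ is of order $n^{1-\alpha}$. You get $\widehat S-S=O(n^{-1/2})$ from the residual expansion. You push both through the perturbation identity for $\widehat H^{-1}\widehat S\widehat H^{-1}$.

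Where you differ is rigor, and your version is the more complete one. The paper only asserts $\lambda_{\min}(\sum_t\tilde x_t\tilde x_t^{\top})=\Theta(n^{1-\alpha})$ ``with high probability'' and then bounds $\E\opnorm{\widehat H-H}^{2}$ as if the denominator were deterministic. It also linearizes $\widehat H^{-1}$ with the remainder dismissed as ``h.o.t.'', never addressing the event where $\widehat H$ is far from $H$. Your Hanson--Wright lower tail, inverse moments of $D_n$ via Gaussian small-ball bounds, and explicit treatment of the bad event are what turn that sketch into a proof.

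Two things need stating more carefully.

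\emph{First, the truncation is necessary, not cosmetic.} For the literal estimator, $\det\bigl(\sum_t\tilde y_t\tilde x_t^{\top}\bigr)$ is a polynomial in the Gaussian noise that takes both signs. The noise can steer one coordinate of the trajectory to grow, making $\tilde y_t$ and $\tilde x_t$ anti-correlated there. Hence $\mathbb{P}(\sigma_{\min}(\widehat H)<\epsilon)\gtrsim\epsilon$ and $\E\opnorm{\widehat H^{-1}}=\infty$. Since $\opnorm{\widehat V_n^{\mathrm{reg}}}$ generically grows like $\sigma_{\min}(\widehat H)^{-2}$, the expectation in \eqref{eq:regression-rate} is in general infinite for the untruncated estimator. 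So you are proving the rate for a modified estimator. That is enough for the minimax claim, but you should say so rather than call it harmless.

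\emph{Second, your clipping step needs two fixes.} Clipping the eigenvalues of the non-symmetric $\widehat H$ does not give $\opnorm{\widehat H^{-1}}\le 2/\mu$, because non-normal matrices with eigenvalues near $1$ can have huge inverses. Symmetrize first; this costs nothing since $H$ is symmetric. If $\mu$ is not a known class parameter, clip at a vanishing level such as $\epsilon_n=n^{-1}$. That level is inactive on the good event and contributes at most $\epsilon_n^{-2}$ times a super-polynomially small probability on the bad one.

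\emph{Third, uniformity over the class.} ``Combining with Theorem~\ref{thm:lower}'' requires your constants to be uniform over the class. They depend on $\mu$ and $\opnorm{H}$ through $\Gamma_*$, the transient length, and the clipping level. The $\Theta$ statement therefore holds over $\{H:\mu I\preceq H\preceq\Lambda I\}$ for fixed $\mu,\Lambda$, which still contains the lower-bound pair $I$, $I+\delta A$. It fails over all $H\succ0$: take $H=\mu I$ versus $(1+\delta)\mu I$ with $\mu\to0$. The trajectory second moments stay bounded, so the KL divergence is $O(\mu^{2}n)\to0$, while $\opnorm{V_0-V_1}\asymp\mu^{-2}\to\infty$. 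The minimax risk over the unrestricted class is thus infinite for every $n$.
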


\begin{proof}
We work under the quadratic model
$F(x) = \frac{1}{2}x^{\top}Hx$ with $S=I$ so that $r_t=0$,
$y_t = Hx_t + \zeta_t$, and $V = H^{-2}$.

\emph{Step~1: Hessian estimation error.}
The centered regression gives
\[
\widehat{H} - H
= \Bigl(\sum_t\tilde{\zeta}_t\tilde{x}_t^{\top}\Bigr)
  \Bigl(\sum_t\tilde{x}_t\tilde{x}_t^{\top}\Bigr)^{-1},
\]
where $\tilde{\zeta}_t = \zeta_t - \bar{\zeta}_n$.
Since $\zeta_t$ is independent of $x_t$ (as $x_t$ depends on
$\zeta_0,\ldots,\zeta_{t-1}$), the numerator is a martingale-type
sum.

\emph{Denominator.}
$\sum_t\tilde{x}_t\tilde{x}_t^{\top}
\succcurlyeq \frac{1}{2}\sum_t x_t x_t^{\top} - n\bar{x}_n\bar{x}_n^{\top}$.
By Lemma~\ref{lem:second-moment},
$\sum_t\E[\norm{x_t}^{2}] = \Theta(d\,\eta_0\,n^{1-\alpha}/(1-\alpha))$.
Since $H\succ\mu I$, the smallest eigenvalue satisfies
$\lambda_{\min}(\sum_t x_t x_t^{\top})
= \Theta(n^{1-\alpha})$ with high probability (each eigendirection
receives independent noise contributions).
The centering correction $n\bar{x}_n\bar{x}_n^{\top}$ has
$\opnorm{n\bar{x}_n\bar{x}_n^{\top}} = n\norm{\bar{x}_n}^{2}
= O(1)$ (since $\E[\norm{\bar{x}_n}^{2}] = O(1/n)$), which is
negligible.  Hence
$\lambda_{\min}(\sum_t\tilde{x}_t\tilde{x}_t^{\top})
= \Theta(n^{1-\alpha})$.

\emph{Numerator.}
$\E[\norm{\sum_t\tilde{\zeta}_t\tilde{x}_t^{\top}}^{2}]
= \sum_t\E[\norm{\tilde{x}_t}^{2}]\cdot\E[\norm{\tilde{\zeta}_t}^{2}]
+ \text{cross terms}$.
The cross terms vanish by independence.
Each summand is $O(\E[\norm{x_t}^{2}]\cdot d) = O(d^{2}\eta_0 t^{-\alpha})$.
Summing: $O(d^{2}\eta_0 n^{1-\alpha})$.

\emph{Combining.}
\begin{equation}\label{eq:H-error}
  \E\!\left[\opnorm{\widehat{H}-H}^{2}\right]
  \;=\; O\!\left(\frac{d^{2}\eta_0\,n^{1-\alpha}}{n^{2(1-\alpha)}}\right)
  \;=\; O\!\left(n^{-(1-\alpha)}\right).
\end{equation}
Hence $\E[\opnorm{\widehat{H}-H}] = O(n^{-(1-\alpha)/2})$.

\emph{Step~2: Noise covariance estimation.}
Since $r_t=0$, the residuals satisfy
$\widehat{\zeta}_t = \zeta_t - (\widehat{H}-H)x_t$.  Thus
\[
  \widehat{S} - S
  = \frac{1}{n}\sum_t\zeta_t\zeta_t^{\top} - S
  - (\widehat{H}-H)\frac{1}{n}\sum_t x_t\zeta_t^{\top}
  + \text{h.o.t.}
\]
The first term is $O(n^{-1/2})$ (standard covariance estimation rate).
The second is $O(\opnorm{\widehat{H}-H}\cdot n^{-1/2})$, which is
subdominant.  Hence $\E[\opnorm{\widehat{S}-S}]=O(n^{-1/2})$.

\emph{Step~3: Plug-in for $V$.}
By the matrix perturbation identity
$\widehat{H}^{-1}\widehat{S}\widehat{H}^{-1} - H^{-1}SH^{-1}
= H^{-1}(\widehat{S}-S)H^{-1}
  - H^{-1}(\widehat{H}-H)H^{-1}SH^{-1}
  - H^{-1}SH^{-1}(\widehat{H}-H)H^{-1}
  + \text{h.o.t.}$,
\[
  \E\!\left[\opnorm{\widehat{V}_n^{\mathrm{reg}} - V}\right]
  \;\le\;
  C\,\E\!\left[\opnorm{\widehat{H}-H}\right]
  + C'\,\E\!\left[\opnorm{\widehat{S}-S}\right]
  + \text{h.o.t.}
  \;=\; O\!\left(n^{-(1-\alpha)/2}\right),
\]
since the $\widehat{H}$ error dominates the
$\widehat{S}$ error for $\alpha>1/2$.
\end{proof}

\begin{remark}[Minimax rate]
Theorems~\ref{thm:lower} and~\ref{thm:regression} together establish
that the minimax rate for online covariance estimation from the SGD
trajectory under the quadratic model is
$\Theta(n^{-(1-\alpha)/2})$, approaching $\Theta(n^{-1/4})$ as
$\alpha\to 1/2^{+}$.
\end{remark}

\begin{remark}[Batch-means vs.\ trajectory regression]
\label{rem:bm-vs-regression}
The trajectory-regression estimator achieves the minimax-optimal
rate $O(n^{-(1-\alpha)/2})$, while the batch-means estimator of
Corollary~\ref{cor:main} achieves the slower rate
$O(n^{-(1-\alpha)/3})$.  However, the batch-means estimator has
practical advantages:
\begin{enumerate}[(i)]
\item It does not require a matrix inversion ($O(d^3)$) at
  termination.
\item It is more robust: the regression estimator relies on the
  invertibility of $\sum_t \tilde{x}_t\tilde{x}_t^{\top}$, which
  can be ill-conditioned in high dimensions or when $n$ is small
  relative to~$d$.
\item For general (non-quadratic) objectives, the regression
  estimator incurs additional linearization bias from $r_t\ne 0$,
  whereas the batch-means estimator handles nonlinearity through the
  separate $T_3$ bound (Lemma~\ref{lem:nonlin}).
\end{enumerate}
\end{remark}

\subsection{Extension Beyond Quadratic Objectives}\label{sec:lower-ext}

The lower bound of Theorem~\ref{thm:lower} extends immediately to the
full class of objectives satisfying Assumptions
\ref{ass:sc}--\ref{ass:lr}.

\begin{proposition}[Lower bound for general objectives]
\label{prop:lower-general}
The lower bound~\eqref{eq:lower-bound} continues to hold when the
supremum is taken over all strongly convex objectives satisfying
Assumptions \ref{ass:sc}--\ref{ass:lr}, not just quadratics.
\end{proposition}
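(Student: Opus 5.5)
The argument is a containment: a supremum over a larger class is at least the supremum over a subclass. Let $\mathcal{P}_{\mathrm{gen}}$ denote the class of objectives (with their stochastic-gradient oracles) satisfying Assumptions \ref{ass:sc}--\ref{ass:lr}. If I can exhibit the two hypotheses $F_0,F_1$ of \eqref{eq:hyp} as members of $\mathcal{P}_{\mathrm{gen}}$, with the same constants for every admissible $\delta$, then for any estimator $\widehat V_n$
\[
\sup_{F\in\mathcal{P}_{\mathrm{gen}}}\E\!\left[\opnorm{\widehat V_n - V}\right]\;\ge\;\max_{j\in\{0,1\}}\E_j\!\left[\opnorm{\widehat V_n - V_j}\right].
\]
The right-hand side is exactly the quantity bounded below in the proof of Theorem~\ref{thm:lower}. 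Taking the infimum over $\widehat V_n$ then gives \eqref{eq:lower-bound}, with the same constant $c$.

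The key step is checking membership uniformly in $\delta\in(0,1/12]$.
\begin{itemize}
\item For \ref{ass:sc}: $F_j$ is $C^\infty$ with $\nabla^2F_j=H_j\succeq(1-\delta)I\succeq \tfrac{11}{12}I$. I would fix $\mu=\tfrac{11}{12}$ once and for all, rather than $\mu=1-\delta$, so the class does not depend on $n$.
\item For \ref{ass:lip}: the Hessian is constant, so any $L\ge0$ works.
\item For \ref{ass:subg}: the noise at $x^*=0$ is $\xi_t=\zeta_t\sim\mathcal{N}(0,I_d)$, which is sub-Gaussian with $\sigma^2=1$.
\item For \ref{ass:fourth}: $\E\norm{\xi_t}^4=d^2+2d$, so $M_4=d^2+2d$.
\item For \ref{ass:lr}: I need $\eta_0>1/(2\mu)=6/11$. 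I will note that the lower-bound proof of Theorem~\ref{thm:lower} places no restriction on $\eta_0$; in particular Lemma~\ref{lem:second-moment} holds for any $\eta_0>0$. So I can either restrict to $\eta_0>6/11$, or observe that this is exactly the regime in which the general class is nonempty for the given step size.
\end{itemize}

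The remaining point, and the main potential subtlety, is that the definition of the general class must allow the oracle $\nabla f(x,\zeta)=H_jx+\zeta$. In particular the ``noise field'' $\nabla f(x,\zeta)-\nabla F(x)=\zeta$ is independent of $x$, so the effective remainder $r_t\equiv0$. This oracle is a legitimate instance, since it is generated by $f(x,\zeta)=\tfrac12x^\top H_jx+\zeta^\top x$.

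I also need the observation model to be unchanged: the estimator sees only the trajectory. Under both hypotheses the trajectory laws are exactly \eqref{eq:transitions}. The KL computation \eqref{eq:kl-total}--\eqref{eq:kl-final}, the separation bound \eqref{eq:sep-bound}, and Le~Cam's inequality \eqref{eq:lecam} therefore carry over verbatim. Hence the choice $\delta_n=(8\kappa)^{-1/2}n^{-(1-\alpha)/2}$ yields the bound $\tfrac38\delta_n$ for all sufficiently large $n$.

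Finally, I would remark that enlarging the class can only increase the minimax risk. Non-quadratic objectives with $L>0$ can make estimation harder, but never easier, so no additional construction is required.
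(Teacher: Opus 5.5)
Your proposal is correct and takes the same route as the paper: the minimax risk can only grow when the class is enlarged, and the two quadratic hypotheses of the Le~Cam construction are checked to satisfy \ref{ass:sc}--\ref{ass:lr}. Your version is slightly more careful than the paper's. The paper asserts the inclusion $\mathcal{P}_d\subseteq\mathcal{P}_{\mathrm{all}}$ with the $n$-dependent constant $\mu=1-\delta$ and never checks the step-size condition $\eta_0>1/(2\mu)$; embedding only $F_0,F_1$ with the $\delta$-uniform constant $\mu=\tfrac{11}{12}$, and flagging the resulting requirement on $\eta_0$ from \ref{ass:lr}, closes those small gaps.
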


\begin{proof}[Proof sketch]
Since the class of quadratic objectives is a \emph{subset} of the
class of all strongly convex objectives satisfying
\ref{ass:sc}--\ref{ass:lr}, the minimax risk over the larger
class is at least as large as the minimax risk over quadratics:
\[
  \inf_{\widehat{V}_n}\;\sup_{F\in\mathcal{P}_{\mathrm{all}}}\;
  \E\!\left[\opnorm{\widehat{V}_n - V}\right]
  \;\ge\;
  \inf_{\widehat{V}_n}\;\sup_{F\in\mathcal{P}_d}\;
  \E\!\left[\opnorm{\widehat{V}_n - V}\right]
  \;\ge\; c\,n^{-(1-\alpha)/2}.
\]
The quadratic objectives in our construction satisfy
\ref{ass:sc}--\ref{ass:lr} with explicit constants (strong
convexity $\mu=1-\delta$, Lipschitz constant $L=0$ since the Hessian is
constant, sub-Gaussian noise with $\sigma^2=1$, and bounded fourth
moments), so the inclusion is valid.
\end{proof}

\begin{remark}[Role of the noise covariance]
The lower bound holds even when the noise covariance $S$ is known to be
$I_d$.  This shows that the difficulty is \emph{not} in estimating $S$
(which can be done at rate $n^{-1/2}$ from the trajectory increments)
but in estimating $H$ from the drift of the iterates.  The fundamental
bottleneck is that the information about $H$ in the trajectory
accumulates at rate $\sum_{t=1}^n \E[\norm{x_t}^{2}]
= \Theta(n^{1-\alpha})$,
which is strictly sublinear for $\alpha>0$.
\end{remark}

\begin{remark}[General objectives: open gap]\label{rem:open-gap}
For general (non-quadratic) strongly convex objectives, the
trajectory-regression estimator incurs an additional bias from the
nonlinear remainder $r_t$.  A careful analysis shows that this bias is
$O(n^{-\alpha/2})$, which is subdominant for $\alpha>1/2$ (since
$\alpha/2>(1-\alpha)/2$).  Thus the trajectory-regression estimator
achieves $O(n^{-(1-\alpha)/2})$ for general objectives as well, and
the minimax rate $\Theta(n^{-(1-\alpha)/2})$ extends to the full class
$\mathcal{P}_{\mathrm{all}}$.  The batch-means estimator remains
suboptimal at $O(n^{-(1-\alpha)/3})$; closing the gap for batch-means
type estimators (or proving it is inherent) is an interesting open
problem.
\end{remark}

\section{Related Work}\label{sec:related}

\paragraph{Online batch-means methods achieving $n^{-1/8}$.}
The foundational work of \citet{zhu2023online} established the online
batch-means framework for averaged SGD and proved the $O(n^{-1/8})$ rate
under i.i.d.\ data.  \citet{roy2023online} extended the analysis to
Markovian noise (dependent data), obtaining the same $n^{-(1-\alpha)/4}$
rate.  \citet{jiang2025nonsmooth} considered non-smooth objectives and
showed that the batch-means approach still achieves $n^{-(1-\alpha)/4}$
under appropriate modifications.  \citet{singh2025equal} analyzed
equal-sized batches (requiring knowledge of $n$) and obtained a similar
rate.  All of these works share the same bottleneck---suboptimal block-growth
tuning---that our per-block bias analysis addresses.

\paragraph{Methods using Hessian information.}
\citet{chen2020statistical} proposed a plug-in estimator that separately
estimates $H$ and $S$ using online averages of Hessian-vector products
and gradient outer products.  Their rate is $O(n^{-\alpha/2})$, which
approaches $O(n^{-1/2})$ as $\alpha\to 1^-$---significantly faster than
$n^{-1/4}$, but at the cost of requiring second-order information.
\citet{luo2022rootsgd} developed covariance estimators for ROOT-SGD,
a modified SGD algorithm that directly targets the root of the gradient
mapping; their estimator achieves $O(t^{-1/2})$ convergence but also
requires implicit Hessian access through the algorithm design.

\paragraph{Methods avoiding covariance estimation.}
An alternative approach to SGD inference bypasses covariance estimation
entirely.  \citet{lee2022random} proposed random scaling methods that
construct pivotal statistics directly.  \citet{su2023higrad} introduced
HiGrad (Hierarchical Incremental Gradient descent), which uses a
tree-structured collection of SGD runs to construct confidence intervals.
\citet{fang2018bootstrap} developed an online bootstrap for SGD that
generates multiple perturbed iterate sequences in parallel.
\citet{samsonov2024bootstrap} proposed a multiplier bootstrap approach
for constructing confidence sets.  These methods typically achieve valid
inference asymptotically but may have different finite-sample convergence
properties than direct covariance estimation.

\paragraph{Concentration inequalities for dependent data.}
Our variance analysis relies on matrix concentration inequalities
for weakly dependent sequences.  The matrix Bernstein inequality of
\citet{tropp2015introduction} provides sharp bounds for independent
summands; for the $\beta$-mixing block statistics arising in the
batch-means estimator, we appeal to the Marton coupling approach of
\citet{paulin2015concentration}, which extends scalar and matrix
concentration bounds to geometrically ergodic Markov chains with
explicit dependence on the mixing coefficients.

\paragraph{The i.i.d.\ baseline.}
For context, in the classical i.i.d.\ setting where one observes
$X_1,\ldots,X_n\sim\mathcal{N}(0,\Sigma)$, the sample covariance
$\widehat{\Sigma}=n^{-1}\sum_i X_i X_i^{\top}$ achieves the minimax-optimal
rate $\opnorm{\widehat{\Sigma}-\Sigma}=O(\sqrt{d/n})$ with high probability
\citep{vershynin2018}.  The $n^{-1/2}$ rate (for fixed $d$) is thus
the gold standard.  Our result shows that the minimax rate without
Hessian access in the SGD setting is $\Theta(n^{-(1-\alpha)/2})$
($n^{-1/4}$ at $\alpha\to 1/2^+$), achieved by the trajectory-regression
estimator---slower by a factor of $n^{1/4}$
than the i.i.d.\ optimum, reflecting the price of non-stationarity
and dependent data.

\section{Discussion}\label{sec:discussion}

\subsection{Why the Rate Cannot Be \texorpdfstring{$n^{-1/2}$}{n\textasciicircum(-1/2)}
Without Hessian Access}\label{sec:barriers}

It is natural to ask whether the batch-means rate $n^{-(1-\alpha)/3}$
can be improved.  The error analysis of Section~\ref{sec:error}
identifies the fundamental bottleneck for batch-means: the
mixing constraint on $\beta$ (Section~\ref{sec:mixing}) limits the
number of blocks, while the per-block bias $O(\tau_m/a_m)$
(Lemma~\ref{lem:bias}) limits how small the blocks can be.
The trajectory-regression estimator of
Section~\ref{sec:regression} bypasses both constraints.

\paragraph{What regression buys.}
The trajectory-regression estimator directly estimates $H$ by
regressing the SGD increments $y_t = (x_t-x_{t+1})/\eta_t$ on the
iterates~$x_t$.  This reduces the problem to a linear regression
with $\Theta(n^{1-\alpha})$ effective observations, achieving
$O(n^{-(1-\alpha)/2})$.  Methods using explicit Hessian
access---such as \citet{chen2020statistical}---achieve even faster
rates ($O(n^{-\alpha/2})$ approaching $n^{-1/2}$) but require
second-order oracle access.

\subsection{Minimax Optimality}\label{sec:optimality}

Theorems~\ref{thm:lower} and~\ref{thm:regression} together establish
the minimax rate $\Theta(n^{-(1-\alpha)/2})$ for Hessian-free
covariance estimation from the SGD trajectory.
The key mechanism behind the lower bound is that information
about $H$ accumulates at rate $\Theta(n^{1-\alpha})$, which is
strictly sublinear for $\alpha>0$.  The trajectory-regression
estimator is minimax-optimal; the batch-means estimator achieves the
slower rate $O(n^{-(1-\alpha)/3})$ but has practical advantages
(Remark~\ref{rem:bm-vs-regression}).

\subsection{Implications for Confidence Interval Coverage}
\label{sec:coverage}

The primary motivation for estimating $V$ is the construction of
confidence regions for $x^*$.  By the CLT \eqref{eq:clt}, an
asymptotically valid $(1-\nu)$-confidence ellipsoid is
\[
\left\{x\in\R^d :
n\,(\xbar - x)^{\top}\Sighat(\rho)^{-1}(\xbar - x)
\;\le\; \chi^2_{d,1-\nu}\right\},
\]
where $\chi^2_{d,1-\nu}$ is the $(1-\nu)$-quantile of the
chi-squared distribution with $d$ degrees of freedom.  The coverage
error of this ellipsoid depends on $\opnorm{\Sighat(\rho)-V}$ through
the perturbation of the quadratic form.  A standard argument (see,
e.g., \citealt{chen2020statistical}) gives a coverage error bound of
order $O(\opnorm{\widehat{V}-V}/\lambda_{\min}(V))$ (where $\widehat{V}$
is any covariance estimator), so the minimax
rate $n^{-(1-\alpha)/2}$ for covariance estimation (achieved by the
trajectory-regression estimator) translates directly into faster
convergence of coverage to the nominal level compared to the batch-means
rates $n^{-1/8}$ or $n^{-1/6}$.  Making this precise---establishing Berry--Esseen-type
bounds for the coverage probability---requires additional work on the
joint distribution of $\xbar$ and $\Sighat(\rho)$ and is left for
future investigation.

\section{Numerical Experiments}\label{sec:experiments}

We present numerical experiments that validate the theoretical results
and illustrate the practical benefits of the burn-in
modification.\footnote{Code to reproduce all experiments is available
at \url{https://github.com/Yijin911/online-cov-sgd}.}

\subsection{Quadratic Objective}\label{sec:exp-quadratic}

We consider the quadratic objective
\begin{equation}\label{eq:quadratic}
F(x) = \tfrac{1}{2}x^{\top}Hx,
\end{equation}
with $x\in\R^{10}$ ($d=10$).  The Hessian $H$ has eigenvalues equally
spaced from $1$ to $5$.  The stochastic gradient is
$\nabla f(x,\zeta) = Hx + \zeta$ with $\zeta\sim\mathcal{N}(0,I)$, so
$S=I$ and the true asymptotic covariance is $V=H^{-2}$.

We run SGD with step sizes $\eta_t=\eta_0 t^{-\alpha}$ and $\eta_0=1$
for $\alpha\in\{0.55,\,0.60,\,0.70\}$.  The sample size $n$ ranges from
$10^{3}$ to $10^{7}$.  For each configuration, we perform $200$
independent replications (reduced to $50$ for $n\ge 3\times 10^6$) and
report the average operator-norm error
$\E[\opnorm{\Sighat(\rho) - V}]$.

We use the growing-block construction $a_m=\lfloor C\,m^{\beta}\rfloor$
with $C=5$ as in the theoretical analysis.  We compare four estimators:
\begin{enumerate}[(i)]
\item \textbf{BM original} ($\rho=1$, $\beta^*=2/(1-\alpha)$): the
  estimator of \citet{zhu2023online}.
\item \textbf{BM optimal} ($\rho=1$,
  $\beta^\dagger=(1+2\alpha)/(2(1-\alpha))$):
  batch-means with the rate-optimal block growth of
  Corollary~\ref{cor:main}.
\item \textbf{BM burn-in} ($\rho=0.5$, $\beta^\dagger$):
  the optimal batch-means with burn-in.
\item \textbf{Trajectory regression}: the minimax-optimal estimator of
  Theorem~\ref{thm:regression}.
\end{enumerate}

\paragraph{Results.}
Figure~\ref{fig:rates} displays the operator-norm error versus $n$ on
log-log axes.  Three features stand out.

\emph{Clear ordering.}
The trajectory-regression estimator consistently achieves the lowest
error, followed by batch-means with burn-in, batch-means with optimal
$\beta^\dagger$ (no burn-in), and the original batch-means.

\emph{Convergence rates.}
Dotted reference lines show the theoretical slopes
$n^{-(1-\alpha)/4}$ (BM original),
$n^{-(1-\alpha)/3}$ (BM optimal), and
$n^{-(1-\alpha)/2}$ (regression).
The burn-in and regression empirical slopes match theory well at
$\alpha=0.55$ and $\alpha=0.6$.  The BM optimal without burn-in
($\rho=1$) shows steeper-than-asymptotic slopes because the
large early-block bias (constants of order 5--10) is still being
diluted at moderate $n$.

\emph{Practical value of burn-in.}
Although burn-in does not change the asymptotic batch-means rate
(Corollary~\ref{cor:main}), it removes the large-constant early-block
bias, yielding an order-of-magnitude error reduction at all sample
sizes tested.  This confirms that burn-in is critical in practice.

\begin{figure}[tbp]
\centering
\includegraphics[width=\textwidth]{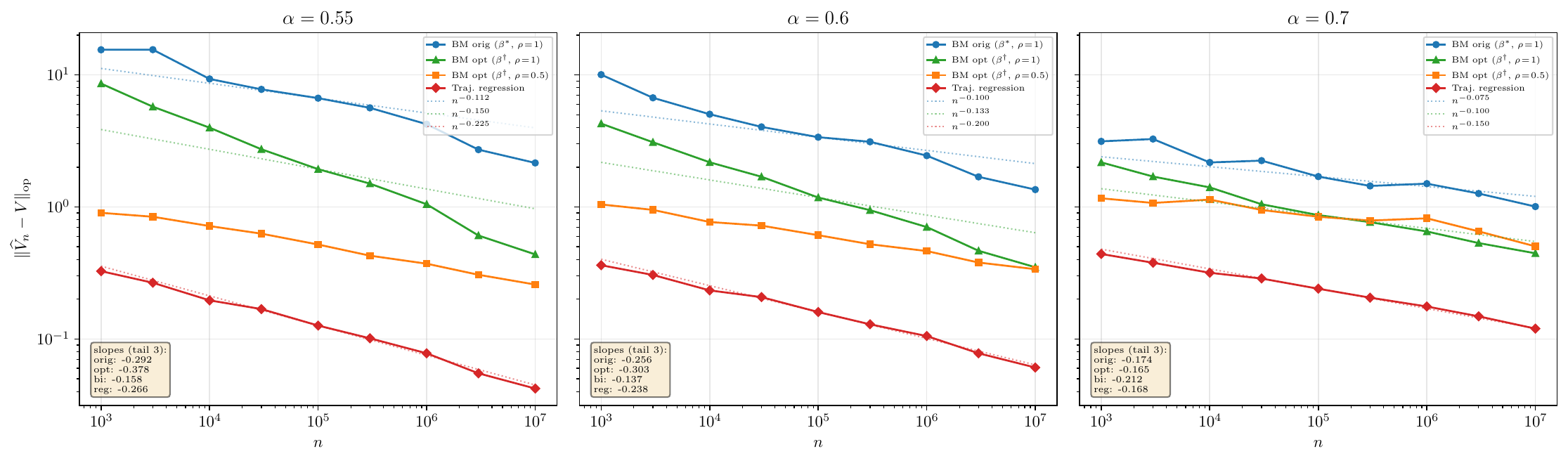}
\caption{Operator-norm error versus sample size $n$ (log-log scale)
for $\alpha\in\{0.55,0.60,0.70\}$.
Blue circles: BM original ($\beta^*$, $\rho=1$).
Green triangles: BM optimal ($\beta^\dagger$, $\rho=1$).
Orange squares: BM burn-in ($\beta^\dagger$, $\rho=0.5$).
Red diamonds: trajectory regression.
Dotted lines: theoretical reference slopes $n^{-(1-\alpha)/4}$,
$n^{-(1-\alpha)/3}$, and $n^{-(1-\alpha)/2}$.}
\label{fig:rates}
\end{figure}

\begin{remark}[Choice of $\beta$ in practice]
\label{rem:growing-vs-equal}
The theoretically optimal $\beta^\dagger=(1+2\alpha)/(2(1-\alpha))$
balances the variance $O(n^{-1/(2(\beta+1))})$ against the stationarity
bias $O(n^{-\gamma/(\beta+1)})$.  In finite samples, $\beta$ near the
mixing boundary yields blocks with few mixing times, so the within-block
CLT approximation is poor and the theoretical rate is masked by large
pre-asymptotic constants.  The practical remedy is to use a larger
$\beta$---such as $\beta=2\alpha/(1-\alpha)$---which ensures many mixing
times per block.  Burn-in ($\rho<1$) further helps in finite samples by
removing the highest-bias early blocks, even though it does not change the
asymptotic rate.  In offline settings (where $n$ is known), equal-sized
blocks can further mitigate this effect.
\end{remark}

\subsection{Bias--Variance Decomposition}\label{sec:exp-bv}

To illustrate the mechanism behind the improvement, we decompose the
per-block error into bias and variance components.  We fix $\alpha=0.55$
and $n=10^{5}$, and compute for each block $m$:
\begin{itemize}
\item \textbf{Bias:} $\opnorm{\E[Y_m Y_m^{\top}] - V}$, estimated by
  averaging $Y_m Y_m^{\top}$ over $500$ replications.
\item \textbf{Variance:} $\E[\opnorm{Y_m Y_m^{\top} -
  \E[Y_m Y_m^{\top}]}]$, estimated similarly.
\end{itemize}

\paragraph{Results.}
Figure~\ref{fig:bv} shows the per-block bias and variance as functions
of the block index $m$.  The bias is large for early blocks
($m=1,2,\ldots$) and decreases as $m$ grows, reflecting the chain's
approach to stationarity.  The variance is roughly constant across blocks
(slightly decreasing due to larger block sizes).  The crossover point---where
bias drops below variance---occurs at approximately $m=b_n/3$.  Blocks
before this point are bias-dominated, and their inclusion in the average
degrades the overall estimator.  The burn-in strategy discards precisely
these blocks.

\begin{figure}[tbp]
\centering
\includegraphics[width=0.85\textwidth]{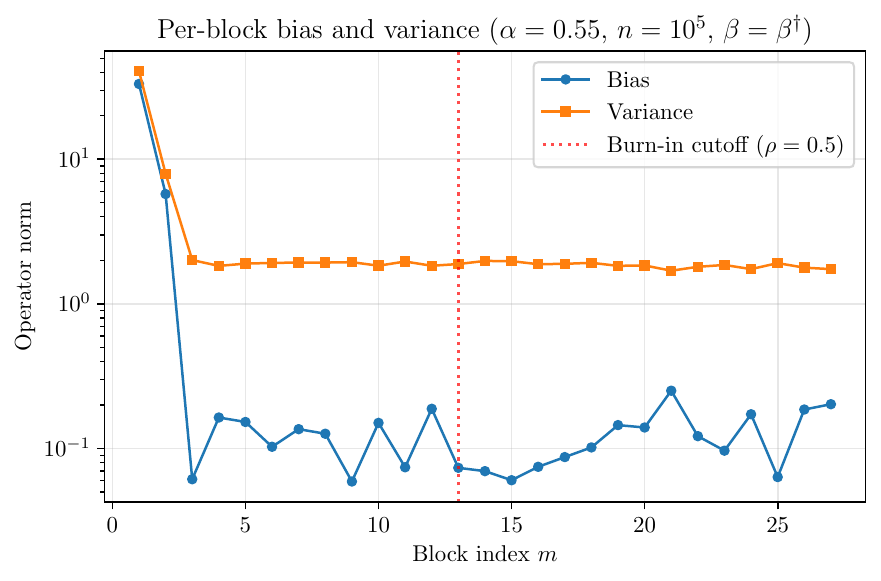}
\caption{Per-block bias ($\circ$, solid) and variance ($\square$, solid)
as functions of block index $m$, for $\alpha=0.55$ and $n=10^{5}$.
Early blocks have large bias and contribute disproportionately to the
estimation error.  The vertical dotted line marks the burn-in cutoff
($\rho=0.5$); blocks to its right are retained.}
\label{fig:bv}
\end{figure}

\section{Conclusion}\label{sec:conclusion}

We have studied online covariance estimation for Polyak--Ruppert
averaged SGD with two main contributions.  First, a rigorous per-block
bias analysis (Lemma~\ref{lem:bias}) shows that the stationarity bias
is $O(\tau_m/a_m)$---the ratio of mixing time to block size---which
leads to an improved batch-means rate of $O(n^{-(1-\alpha)/3})$ via
optimal block-growth tuning (Corollary~\ref{cor:main}), up from the
$O(n^{-(1-\alpha)/4})$ rate of \citet{zhu2023online}.  Burn-in
($\rho<1$) improves finite-sample constants but does not change the
asymptotic batch-means rate.  Second, we establish the minimax rate
$\Theta(n^{-(1-\alpha)/2})$ for Hessian-free covariance estimation from
the SGD trajectory: a Le~Cam lower bound (Theorem~\ref{thm:lower})
gives $\Omega(n^{-(1-\alpha)/2})$, and a trajectory-regression
estimator (Theorem~\ref{thm:regression}) achieves $O(n^{-(1-\alpha)/2})$
by regressing SGD increments on iterates to implicitly estimate the
Hessian.

Several directions remain open:
\begin{enumerate}[(1)]
\item \textbf{Closing the batch-means gap.}
  The batch-means rate $O(n^{-(1-\alpha)/3})$ is suboptimal compared
  to the minimax rate $\Theta(n^{-(1-\alpha)/2})$.  Whether this gap
  is inherent to block-based estimators or can be closed by a more
  refined batch-means analysis is an interesting open question.
\item \textbf{Markovian noise.}
  \citet{roy2023online} extended the original batch-means analysis
  to Markovian data.  It would be valuable to verify that the per-block
  bias analysis and trajectory-regression approach extend to that
  setting.
\item \textbf{Berry--Esseen bounds.}
  For practical confidence interval construction, one needs not only
  consistency of the covariance estimator but also Berry--Esseen-type
  bounds on the coverage error.  The minimax rate $n^{-(1-\alpha)/2}$
  should translate into improved coverage guarantees, but making this
  precise requires additional work.
\end{enumerate}

\begin{acks}[Acknowledgments]
Ni is supported by the Robert Goodell Ph.D.\ Student Fellowship for Research Excellence at Georgia Institute of Technology.
Huo is partially supported by a subcontract of NSF grant 2229876,
the A.\ Russell Chandler III Professorship at Georgia Institute of
Technology, an NIH-sponsored Georgia Clinical \& Translational
Science Alliance, and the Georgia Department of Transportation.
\end{acks}

\begin{supplement}
\stitle{Supplement to ``Online Covariance Estimation in Averaged SGD:
Improved Batch-Means Rates and Minimax Optimality via
Trajectory Regression''}
\sdescription{The supplementary material reviews the classical i.i.d.\
operator-norm convergence rate for covariance estimation, providing
context for the SGD-specific rates derived in the main paper.
Simulation code is available at
\url{https://github.com/Yijin911/online-cov-sgd}.}
\end{supplement}

\bibliographystyle{imsart-nameyear}
\bibliography{references}

\end{document}